\definecolor{darkblue}{rgb}{0, 0, 0.5}
\newtheorem{theorem}{Theorem}
\newtheorem{lemma}{Lemma}
\newtheorem{corollary}{Corollary}
\title{Sculpting Subspaces: Constrained Full Fine-Tuning in LLMs for Continual Learning}
\author{\textbf{Nikhil Shivakumar Nayak}$^{1,4}$\thanks{Correspondence to: Nikhil Shivakumar Nayak \texttt{<nnayak@redhat.com>}.} , \textbf{Krishnateja Killamsetty}$^{2}$, \textbf{Ligong Han}$^{1,4}$,\\
\textbf{Abhishek Bhandwaldar}$^{2,4}$, \textbf{Prateek Chanda}$^{3}$, \textbf{Kai Xu}$^{1,4}$, \textbf{Hao Wang}$^{1,4}$,\\
\textbf{Aldo Pareja}$^{1,4}$, \textbf{Oleg Silkin}$^{1}$, \textbf{Mustafa Eyceoz}$^{1}$, \textbf{Akash Srivastava}$^{1,4}$ \\
\\
$^{1}$Red Hat AI Innovation \quad
$^{2}$IBM Research \quad
$^{3}$IIT Bombay \quad
$^{4}$MIT-IBM Watson AI Lab
}
\begin{document}

\ifcolmsubmission
\linenumbers
\fi

\maketitle

\begin{abstract}
Continual learning in large language models (LLMs) is prone to catastrophic forgetting, where adapting to new tasks significantly degrades performance on previously learned ones. Existing methods typically rely on low-rank, parameter-efficient updates that limit the model's expressivity and introduce additional parameters per task, leading to scalability issues. To address these limitations, we propose a novel continual full fine-tuning approach leveraging adaptive singular value decomposition (SVD). Our method dynamically identifies task-specific low-rank parameter subspaces and constrains updates to be orthogonal to critical directions associated with prior tasks, thus effectively minimizing interference without additional parameter overhead or storing previous task gradients. We evaluate our approach extensively on standard continual learning benchmarks using both encoder-decoder (T5-Large) and decoder-only (LLaMA-2 7B) models, spanning diverse tasks including classification, generation, and reasoning. Empirically, our method achieves state-of-the-art results—\textbf{up to 7\% higher} average accuracy than recent baselines like O-LoRA—and notably maintains the model’s general linguistic capabilities, instruction-following accuracy, and safety throughout the continual learning process by \textbf{reducing forgetting to near-negligible levels}. Our adaptive SVD framework effectively balances model plasticity and knowledge retention, providing a practical, theoretically grounded, and computationally scalable solution for continual learning scenarios in large language models.
\end{abstract}

\section{Introduction}
\label{sec:intro}
Language models have evolved into powerful general-purpose systems with remarkable capabilities across diverse tasks. From sentence classification and multilingual translation to complex reasoning and code generation, large language models (LLMs) such as GPT-3~\citep{brown2020language}, PaLM~\citep{chowdhery2022palm}, and LLaMA-2~\citep{touvron2023llama} have demonstrated unprecedented versatility. However, deploying these models in real-world enterprise scenarios presents a critical practical challenge: the necessity for \emph{continuous adaptation} to dynamically evolving data and emerging tasks without compromising previously acquired knowledge.

Consider scenarios where continual adaptation is crucial: an enterprise assistant continuously integrating new company products, updated policies, and emerging customer needs; or a medical language model assimilating the latest research findings, novel treatment protocols, and evolving medical terminology. Continuously retraining large language models \emph{from scratch} with all accumulated data each time new tasks or datasets arrive is computationally prohibitive and unsustainable at scale.

\emph{Continual learning} addresses this challenge by enabling models to learn sequentially from data streams. However, large language models are particularly prone to \emph{catastrophic forgetting}~\citep{mccloskey1989catastrophic, kirkpatrick2017overcoming}, a phenomenon where adapting to new tasks significantly degrades performance on previously mastered ones. This is due to the interdependent nature of distributed representations in neural networks, where beneficial updates for a new task interfere with critical knowledge for prior tasks.

\begin{figure}
  \centering
%   \def\svgscale{0.55}
% \includesvg{images/COLM_arch_diagram_forgetting.svg}
\includegraphics[width=0.95\textwidth]{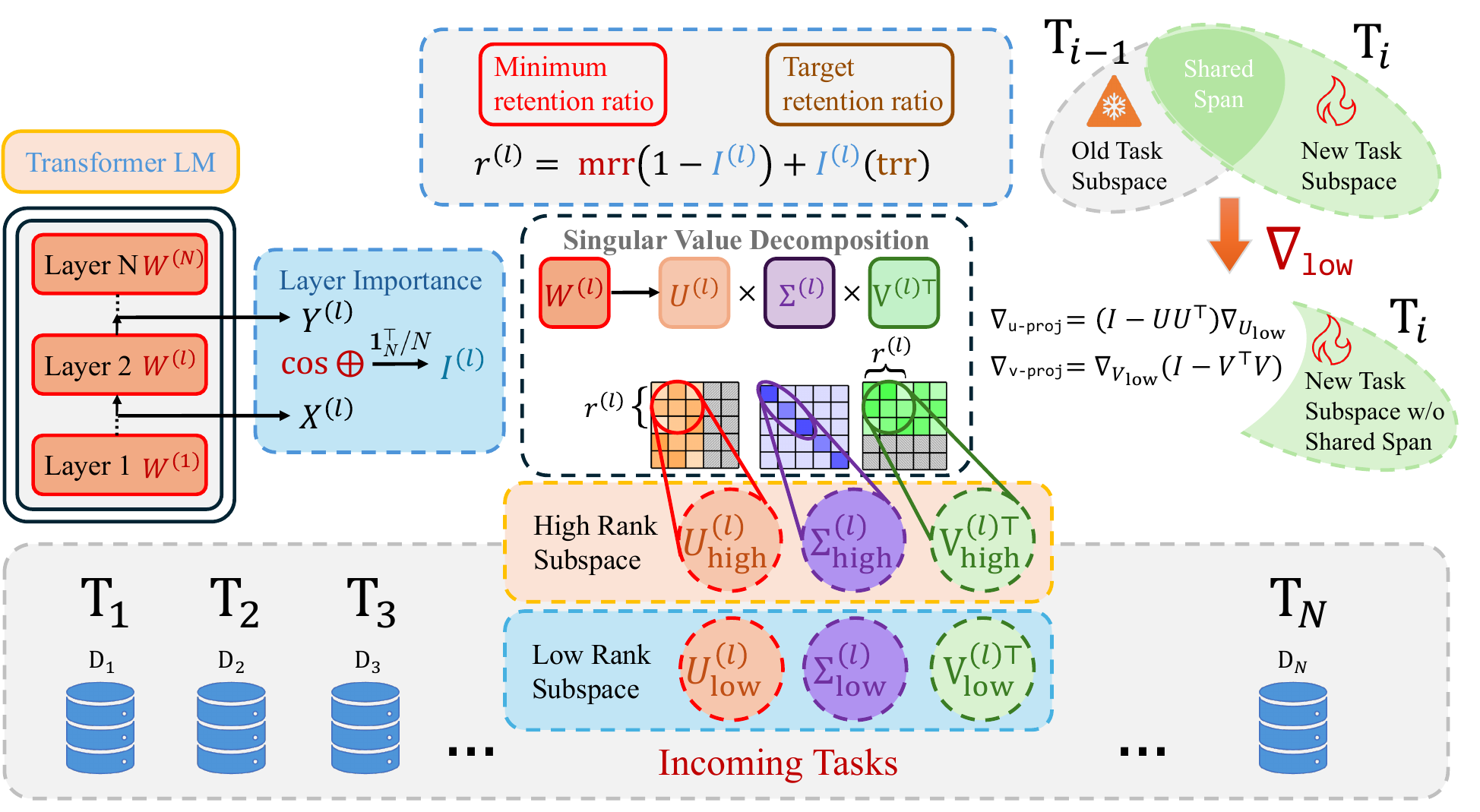}
  \caption{\textbf{Overview of our Adaptive SVD-based Continual Fine-tuning Method.} For each parameter matrix in the network, we perform SVD decomposition to identify high-rank components (associated with larger singular values) that encode crucial knowledge from previous tasks, and low-rank components (associated with smaller singular values) that contribute minimally to model performance. When learning a new task, gradient updates are projected onto the low-rank subspace orthogonal to previous task representations, allowing full parameter updates while minimizing catastrophic forgetting.}
      \label{fig:method_overview}
\end{figure}

% Include this in your preamble
% \usepackage{tikz}

Existing continual learning methods for LLMs primarily rely on parameter-efficient fine-tuning techniques. Approaches utilizing Adapters~\citep{houlsby2019parameter} or Low-Rank Adaptation (LoRA)~\citep{hu2022lora} selectively update small subsets of parameters while freezing the majority of the network. Although these methods mitigate forgetting to some extent, interference can still persist. More recent techniques like Orthogonal LoRA (O-LoRA)~\citep{wang2024olora} and Interference-free LoRA (InfLoRA)~\citep{liang2024inflora} add orthogonality constraints to further reduce task interference. However, these approaches face fundamental limitations: (1) they constrain the model's expressive capacity by restricting updates to small parameter subspaces, (2) they require additional parameters for each new task, increasing memory footprint and inference complexity, and (3) they necessitate task-specific architectures, complicating deployment in real-world settings.

Alternatively, model merging techniques~\citep{ilharco2022patching, yadav2023tiesmerging} fine-tune models separately for each task and subsequently combine them. While this approach can effectively preserve task-specific knowledge to a certain extent, it requires maintaining multiple full-model copies during training and significant expertise to achieve strong performance. Moreover, it often struggles to match the performance of models jointly trained on all tasks. This raises a fundamental research question:

\begin{center}
\emph{How can we enable LLMs to continuously learn new tasks without compromising previously acquired knowledge, while maintaining full model expressivity and avoiding parameter growth?} 
\end{center}

In this work, we introduce a novel continual learning approach that utilizes \emph{adaptive low-rank subspace updates} guided by singular value decomposition (SVD). Our method is built upon a key insight supported by recent research~\citep{sharma2023truth}: neural network weight matrices often contain significant redundancy, with many parameter directions (particularly those associated with smaller singular values) contributing minimally to overall performance. We capitalize on this observation by dynamically identifying these underutilized directions and repurposing them for learning new tasks, while preserving crucial directions that encode knowledge from previously learned tasks.

Specifically, we perform an adaptive SVD-based decomposition for each weight matrix, isolating high-rank components (larger singular values) encoding essential past knowledge and low-rank components (smaller singular values) suitable for learning new tasks. Gradient updates for new tasks are constrained to these low-rank subspaces orthogonal to previously learned task representations, enabling effective full-parameter updates without forgetting prior knowledge. Importantly, unlike parameter-efficient methods, our approach maintains a fixed parameter count regardless of the task sequence length and fully leverages the model's expressive capacity. Figure~\ref{fig:method_overview} provides a visual overview of our method.

Our contributions can be summarized as follows:

\noindent \textbf{1. A geometric approach for continual learning:} We propose a theoretically grounded method that leverages the geometric properties of weight matrices—via adaptive SVD—to identify and reuse parameter subspaces with minimal interference on previously learned tasks. This effectively balances the plasticity needed for new tasks with stability for retaining prior knowledge.

\noindent \textbf{2. Full-model fine-tuning without extra memory:} Our method updates \emph{all} parameters while maintaining a fixed footprint, avoiding new modules or stored gradients for each task and thus scaling gracefully to many tasks.

\noindent \textbf{3. State-of-the-art performance on diverse tasks:} We demonstrate consistent gains across classification, generation, and reasoning benchmarks using T5-Large and LLaMA-2 (7B). Compared to existing methods, our approach achieves \emph{better accuracy, stronger knowledge retention, and nearly negligible forgetting}—while preserving general linguistic capabilities, instruction-following, and safety.

\noindent \textbf{4. Thorough empirical and theoretical validation:} We provide in-depth analyses verifying the effective repurposability of low-rank subspaces, showing that these directions can be used for new tasks without degrading old ones. Our experiments (Sections~\ref{subsec:assumption_validation}, \ref{subsec:low_rank_assumption}) confirm practical robustness while we evaluate theoretical soundness in Appendix~\ref{appendix:theorem_tight_bounds}.

The remainder of this paper is structured as follows. Section~\ref{sec:related_work} reviews relevant continual learning methods. Section~\ref{sec:methodology} introduces our adaptive subspace-based fine-tuning method with theoretical justifications. Section~\ref{sec:experimental_results} describes the benchmarks and evaluation metrics along with experimental results. Finally, Section~\ref{sec:conclusion} summarizes our contributions and outlines future research directions.

\paragraph{Code Availability.} 
Our code implementation is available at \url{https://github.com/Red-Hat-AI-Innovation-Team/orthogonal-subspace-learning}.

\section{Related Work}
\label{sec:related_work}
Continual learning methods for large language models primarily tackle catastrophic forgetting \citep{kirkpatrick2017overcoming, zenke2017continual} and generally fall into three main categories: \textit{parameter-efficient fine-tuning}, \textit{regularization and isolation approaches}, and \textit{unconstrained full-model fine-tuning and merging techniques}.

\noindent \textbf{Parameter-Efficient Fine-Tuning: } Parameter-efficient approaches address catastrophic forgetting by freezing most pretrained parameters and updating only a small subset of task-specific parameters. Prominent examples include Adapter modules \citep{houlsby2019parameter} and various Low-Rank Adaptation (LoRA) methods such as O-LoRA \citep{wang2024olora} and InfLoRA \citep{liang2024inflora}. These techniques effectively reduce interference by isolating updates within small, constrained subspaces. However, they limit model expressiveness due to the restricted update space and often require additional parameters per task, raising scalability concerns.

\noindent \textbf{Regularization and Isolation Approaches: } Regularization-based methods such as Elastic Weight Consolidation (EWC) \citep{kirkpatrick2017overcoming} and Synaptic Intelligence (SI) \citep{zenke2017continual} penalize updates to important parameters without completely preventing them. While these approaches allow for full-model updates, they do not fundamentally eliminate interference, causing gradual performance degradation across multiple tasks. In contrast, parameter isolation techniques, such as PackNet \citep{mallya2018packnet} and Progressive Neural Networks \citep{rusu2016progressive}, maintain separate parameter subsets or modules for each task. These approaches effectively prevent interference but introduce redundancy and face scalability challenges as the number of tasks increases.

\noindent \textbf{Full-Model Fine-Tuning and Model Merging: } Standard full-model fine-tuning methods update all parameters when learning each new task, fully exploiting the model's expressive power but risking severe catastrophic forgetting due to conflicting updates \citep{luo2025empirical}. On the other hand, model merging approaches, such as PATCHING \citep{ilharco2022patching}, TIES \citep{yadav2023tiesmerging}, represent an alternative strategy where models are fine-tuned separately for each task and subsequently combined into a unified multitask model by resolving parameter conflicts post-hoc. While effective, these methods incur higher computational costs due to multiple rounds of training and merging.

\noindent \textbf{Positioning Our Work: } Our approach introduces a novel constrained full-parameter update method that differs fundamentally from existing categories. Unlike parameter-efficient approaches, we leverage the entire parameter space, maximizing expressive capacity. Unlike isolation approaches, we don't partition parameters or require additional task-specific modules. Unlike constrained full fine-tuning, we explicitly mitigate interference through geometric constraints. Specifically, we dynamically identify low-rank subspaces via Singular Value Decomposition (SVD) and constrain updates to be orthogonal to previously learned task representations. This geometric approach to interference minimization ensures knowledge preservation while maintaining update flexibility. By operating in the full parameter space while enforcing orthogonality constraints, our method achieves a unique balance between knowledge retention and model plasticity, providing a theoretically grounded and practically scalable solution for continual learning in large language models.

\section{Methodology}
\label{sec:methodology}

Our approach addresses continual learning in large language models by leveraging adaptive low-rank updates guided by Singular Value Decomposition (SVD). We strategically preserve critical knowledge from previous tasks by constraining parameter updates away from dominant (high-rank) singular directions, while enabling model adaptation within complementary (low-rank) directions.

\subsection{Problem Setup and Notation}
\label{subsec:problem_setup}

Let the parameters of an LLM be denoted as:
$$
\theta = \{\mathbf{W}^{(1)}, \mathbf{W}^{(2)}, \dots, \mathbf{W}^{(L)}\},
$$
where each $\mathbf{W}^{(l)} \in \mathbb{R}^{d_{O}^{(l)} \times d_{I}^{(l)}}$ represents the weight matrix of layer $l$. Practical deployments involve matrices with millions or billions of parameters, underscoring the necessity of efficient continual updates.

Given sequential tasks $\{\mathcal{D}_1, \mathcal{D}_2, \dots, \mathcal{D}_T\}$, each defined by data pairs $\{(x_i^t, y_i^t)\}_{i=1}^{n_t}$, our goal is to sequentially adapt parameters $\theta$ to task $\mathcal{D}_t$ without significant performance degradation on previously learned tasks $\mathcal{D}_1, \dots, \mathcal{D}_{t-1}$. Training repeatedly from scratch is computationally prohibitive, necessitating efficient incremental updates.

\subsection{Low-Rank and High-Rank Subspaces via SVD}
\label{subsec:svd_idea}
Extensive empirical evidence shows neural network parameters possess substantial redundancy~\citep{sharma2023truth, hartford2024spectrumtargetedtrainingsignal}, where directions associated with small singular values minimally impact critical model knowledge. Conversely, larger singular values typically encapsulate vital knowledge. Leveraging this observation, we propose:

\begin{quote}
\emph{Projecting parameter updates away from high singular-value directions, preserving previously acquired knowledge, and utilizing low singular-value directions for adaptation to new tasks.}
\end{quote}

Formally, we perform Singular Value Decomposition (SVD) on each weight matrix $\mathbf{W}^{(l)}$ at layer $l$:
\begin{equation}
\label{eq:svd_decomposition}
\mathbf{W}^{(l)} = \mathbf{U}^{(l)} \Sigma^{(l)} (\mathbf{V}^{(l)})^\top,
\end{equation}
where singular values in $\Sigma^{(l)}$ are sorted in descending order. We compute this decomposition once per task, adding minimal overhead compared to full model training.

\subsection{Determining Layer Importance via Input--Output Similarity}
\label{subsec:layer_importance}

Inspired by AdaSVD~\citep{li2025adasvdadaptivesingularvalue}, we quantify layer importance using cosine similarity between a layer's input activations $\mathbf{X}^{(l)}$ and its linear outputs $\mathbf{Y}^{(l)} = \mathbf{W}^{(l)}\mathbf{X}^{(l)}$. Specifically, when evaluating layer importance for task $t+1$, we compute the similarity using data samples from the previous task $t$ as follows:

\begin{equation}
\label{eq:layer_importance}
I^{(l)} = \frac{1}{N} \sum_{i=1}^{N}\text{cosine\_similarity}(\mathbf{X}^{(l)}_i, \mathbf{Y}^{(l)}_i)
\end{equation}

where $N$ denotes the number of data samples from task $t$. Higher similarity indicates minimal directional change, signifying that the layer predominantly preserves rather than transforms activation representations. Such layers are essential for retaining features and ensuring stable propagation of information across tasks. Importance scores are also normalized to have an average of one across layers: $\frac{1}{L}\sum_{l=1}^{L} I^{(l)} = 1.$

\subsection{Adaptive Rank Selection}
\label{subsec:adaptive_rank}
Given the importance of the layer $I^{(l)}$, we introduce two hyperparameters controlling the retention of singular vectors:
\begin{itemize}
    \item \textbf{Minimum Retention Ratio (mrr)}, ensuring minimal essential retention even for the least critical layers.
    \item \textbf{Target Retention Ratio (trr)}, defining the upper retention bound for highly critical layers.
\end{itemize}

The fraction of singular vectors preserved at each layer is computed as:
\begin{equation}
\label{eq:layer_fraction}
r^{(l)} = \mathrm{mrr} + I^{(l)}(\mathrm{trr} - \mathrm{mrr}).
\end{equation}
Singular vectors are partitioned into high-rank $(\mathbf{U}^{(l)}_{\text{high}}, \mathbf{V}^{(l)}_{\text{high}})$ and low-rank $(\mathbf{U}^{(l)}_{\text{low}}, \mathbf{V}^{(l)}_{\text{low}})$ subspaces accordingly; implementation-specific values are provided in Appendix~\ref{appendix:implementation_details}.

\subsection{Orthogonal Gradient Updates in Low-Rank Subspace}
\label{subsec:orthogonal_updates}

To minimize catastrophic forgetting, we enforce updates within the low-rank subspace orthogonal to the high-rank directions:
\begin{equation}
\label{eq:orthogonal_projection}
\nabla \mathbf{W}^{(l)}_{\mathrm{proj}} = \nabla \mathbf{W}^{(l)} - \mathbf{U}^{(l)}_{\text{high}}(\mathbf{U}^{(l)}_{\text{high}})^\top \nabla \mathbf{W}^{(l)} \mathbf{V}^{(l)}_{\text{high}}(\mathbf{V}^{(l)}_{\text{high}})^\top.
\end{equation}

This ensures updates do not overwrite knowledge encoded in critical parameter directions, promoting knowledge retention while enabling effective adaptation.

\subsection{Algorithm Summary}
\label{subsec:method_summary}

\begin{algorithm}[ht]
\caption{Adaptive Low-Rank Continual Learning via SVD}\label{alg:adaptive_svd}
\begin{algorithmic}[1]
\Require Initial parameters \(\theta = \{\mathbf{W}^{(l)}\}_{l=1}^{L}\), tasks \(\{\mathcal{D}_t\}_{t=1}^{T}\), hyperparameters \(\mathrm{mrr}, \mathrm{trr}\).

\For{task \(t = 1, \dots, T\)}
    \State Compute importance \(I^{(l)}\) from layer activations (Eq.~\eqref{eq:layer_importance}); normalize across layers.
    \For{layer \(l = 1, \dots, L\)}
        \State Compute SVD: \(\mathbf{W}^{(l)} = \mathbf{U}^{(l)} \Sigma^{(l)} (\mathbf{V}^{(l)})^\top\).
        \State Retain top \(r^{(l)} = \mathrm{mrr} + I^{(l)}(\mathrm{trr}-\mathrm{mrr})\) singular vectors.
    \EndFor
    
    \While{not converged on task \(\mathcal{D}_t\)}
        \State Sample mini-batch, compute loss \(\mathcal{L}_t(\theta)\), gradients \(\nabla \mathbf{W}^{(l)}\).
        \State Project gradients onto low-rank subspace via:
        \[
        \nabla \mathbf{W}^{(l)}_{\mathrm{proj}} = \nabla \mathbf{W}^{(l)} - \mathbf{U}^{(l)}_{\text{high}}(\mathbf{U}^{(l)}_{\text{high}})^\top \nabla \mathbf{W}^{(l)} \mathbf{V}^{(l)}_{\text{high}}(\mathbf{V}^{(l)}_{\text{high}})^\top
        \]
        \State Update parameters with projected gradients.
    \EndWhile
\EndFor

\Ensure Parameters \(\theta\) updated continually without significant forgetting.
\end{algorithmic}
\end{algorithm}

Our adaptive low-rank continual learning procedure is summarized in Algorithm~\ref{alg:adaptive_svd}.

% \subsection{Algorithm Summary}
% \label{subsec:method_summary}

% Our adaptive low-rank continual learning approach involves:
% \begin{enumerate}
%     \item \textbf{SVD Computation:} Periodically perform SVD for each layer (Eq.~\eqref{eq:svd_decomposition}).
%     \item \textbf{Layer Importance Estimation:} Compute importance from input-output activations similarity (Eq.~\eqref{eq:layer_importance}).
%     \item \textbf{Adaptive Rank Partitioning:} Determine high- and low-rank singular subspaces based on layer importance (Eq.~\eqref{eq:layer_fraction}).
%     \item \textbf{Orthogonal Gradient Projection:} Constrain updates within low-rank directions (Eq.~\eqref{eq:orthogonal_projection}).
%     \item \textbf{Incremental Updates:} Execute projected optimizer updates iteratively for subsequent tasks.
% \end{enumerate}

\subsection{Exploration of Alternative Rank Approximation Methods}
\label{subsec:alternative_methods}

Before developing our adaptive method, we explored rank approximation approaches including LASER~\citep{sharma2023truth} and SPECTRUM~\citep{hartford2024spectrumtargetedtrainingsignal}:

\begin{itemize}
    \item LASER's fixed-rank strategy fails to reflect layer-wise variability, resulting in suboptimal retention–adaptation trade-offs.
    \item SPECTRUM's random-matrix thresholding (using Marchenko–Pastur distribution) is unstable under sequential tasks with diverse distributions. Refer to Appendix~\ref{subsec:singular_value_analysis} for rank approximation results with random-matrix thresholding.
    \item Neither explicitly enforces orthogonality constraints crucial for continual learning.
\end{itemize}

These limitations motivated our adaptive, orthogonality-constrained subspace partitioning method based on explicit layer importance.

\subsection{Theoretical Justification of Adaptive Rank Selection}
\label{subsec:theoretical_justification}

We rigorously justify our adaptive rank selection method through a formal theoretical analysis using a second-order Taylor expansion of the task-specific loss landscape, detailed in Appendix~\ref{appendix:theorem_tight_bounds}. This analysis explicitly demonstrates that preserving parameter directions associated with the highest Hessian eigenvalues—representing directions of greatest curvature—effectively minimizes catastrophic forgetting. Ideally, one would restrict parameter updates away from these high-curvature subspaces, enabling safe updates along lower-curvature directions.

However, explicitly computing and decomposing the Hessian is computationally prohibitive for large-scale language models. Therefore, we employ an efficient approximation inspired by empirical evidence from \citet{Haink2023HessianEA}, who show a robust correlation between the Hessian’s largest eigenvalues and the largest singular values of the model's weight matrices. Leveraging this insight, we replace the expensive Hessian decomposition with Singular Value Decomposition (SVD) on the weight matrices. By retaining the top singular vectors—corresponding to critical knowledge learned from previous tasks—we effectively approximate freezing the high-curvature Hessian directions. Simultaneously, we allow updates within the subspace defined by lower singular values, thereby efficiently enabling adaptation to new tasks without substantial forgetting.

Further supporting our approach, empirical findings \citep{sharma2023truth, li2025adasvdadaptivesingularvalue} highlight that layers with higher input-output similarity exhibit significantly greater Hessian curvature. Our adaptive layer-wise rank allocation strategically exploits this property: layers identified as crucial (high input-output similarity) receive greater singular vector retention, thereby preserving essential knowledge. Conversely, less critical layers allow more aggressive updates in the low-curvature subspace. This layer-specific adaptive strategy aligns well with the theoretical framework, resulting in superior performance in practice.

In Section~\ref{sec:experimental_results}, we empirically validate that our adaptive, SVD-based rank selection method significantly reduces forgetting and consistently outperforms both naive full fine-tuning and uniform low-rank projection baselines, effectively bridging the theoretical ideal with a practical, scalable solution.

\subsection{Validation of Low-Rank Subspace Assumptions}
\label{subsec:assumption_validation}

Our approach assumes that lower singular vectors can safely accommodate new knowledge without significant forgetting. We empirically validate this by systematically pruning low singular value vectors on pre-trained models. Our experiments confirm a negligible performance drop when removing substantial fractions of lower singular vectors (see Section~\ref{subsec:low_rank_assumption}). This supports the theoretical redundancy hypotheses~\citep{chen2020lottery, sharma2023truth}, validating our adaptive low-rank continual learning strategy.

\section{Experimental Results}
\label{sec:experimental_results}
We comprehensively evaluate our adaptive SVD-based continual learning method on established continual learning benchmarks, comparing it extensively with recent state-of-the-art (SOTA) baselines, notably O-LoRA~\cite{wang2024olora}. Our experiments aim to demonstrate the effectiveness, scalability, and practicality of our approach in realistic continual learning scenarios.

\subsection{Benchmarks and Evaluation Protocol}
We adopt two widely-used benchmarks reflecting varying levels of complexity and task diversity:

\textbf{Standard Continual Learning Benchmark (5 Tasks)} introduced by~\citet{Zhang2015CharacterlevelCN}, consisting of classification tasks: AG News, Amazon Reviews, Yelp Reviews, DBpedia, and Yahoo Answers.

\textbf{Extended Continual Learning Benchmark (15 Tasks)}, introduced by~\citet{razdaibiedina2023progressive}, combining tasks from multiple sources, including GLUE (MNLI, QQP, RTE, SST-2), SuperGLUE (WiC, CB, COPA, MultiRC, BoolQ), and IMDB, along with the original 5-task benchmark.

We evaluate two popular large language model architectures, T5-Large (encoder-decoder) and LLaMA-2 7B (decoder-only), using the widely-adopted metric of Average Accuracy (AA), computed across all tasks after training on the final task. To ensure robustness, we follow standard protocols, averaging results over three independent runs with randomly permuted task sequences. Implementation details, hardware configurations, and training hyperparameters for both T5-Large and LLaMA-2 7B models are provided in Appendix~\ref{appendix:implementation_details}.

\subsection{Baseline Methods}

We position our adaptive SVD approach clearly against representative continual learning paradigms:
\begin{itemize}
    \item \textbf{Sequential full-model fine-tuning (SeqFT)}: serves as a lower-bound baseline, prone to catastrophic forgetting.
    \item \textbf{Parameter-efficient LoRA variants} including SeqLoRA, IncLoRA, and the recent SOTA, O-LoRA~\cite{wang2024olora}, which utilize low-rank adapters.
    \item \textbf{Replay-based approaches}, such as standard replay buffers.
    \item \textbf{Regularization methods}, including Elastic Weight Consolidation (EWC)~\cite{kirkpatrick2017overcoming} and Learning without Forgetting (LwF)~\cite{li2017learning}.
    \item \textbf{Prompt-based techniques}, including L2P~\cite{wang2022learning} and ProgPrompt~\cite{razdaibiedina2023progressive}.
    \item \textbf{PerTaskFT}: trains a separate model per task, offering strong performance but requiring extensive computational resources and storage.
    \item \textbf{Multi-task Learning (MTL)}: trains a single model simultaneously on all tasks, representing an ideal upper bound by relaxing continual learning constraints.
\end{itemize}

\subsection{Main Results}

\begin{table}[h]
\centering
\caption{Comparison of Average Accuracy (\%) across standard continual learning benchmarks}
\label{tab:results_summary}
\begin{tabular}{lcc}
\toprule
\textbf{Method} & \textbf{T5-Large (5 tasks)} & \textbf{T5-Large (15 tasks)} \\
\midrule
SeqFT & 28.5 & 7.4 \\
SeqLoRA & 43.7 & 1.6 \\
IncLoRA & 66.4 & 61.2 \\
Replay & 57.8 & 54.2 \\
EWC & 48.7 & 45.1 \\
LwF & 52.3 & 46.9 \\
L2P & 60.7 & 56.1 \\
LFPT5 & 72.7 & 69.2 \\
O-LoRA & 75.8 & 69.6 \\
\textbf{Ours (Adaptive SVD)} & \textbf{75.9} & \textbf{71.3} \\
\midrule
SLERP (Full Model Merge) & 43.1 & 2.2 \\
TIES (LoRA Adapter Merge) & 37.1 & 6.9 \\
\midrule
ProgPrompt & 75.1 & 77.9 \\
PerTaskFT & 70.0 & 78.1 \\
MTL (Upper Bound) & 80.0 & 76.5 \\
\bottomrule
\end{tabular}
\end{table}

Table~\ref{tab:results_summary} clearly shows that our adaptive SVD approach outperforms or matches all baselines on both 5-task and 15-task benchmarks. Importantly, compared to O-LoRA—the current SOTA parameter-efficient baseline—our method achieves superior accuracy, particularly in the more challenging 15-task scenario (71.3\% vs. 69.6\%), highlighting its effectiveness in maintaining task knowledge over extended task sequences. Notably, while PerTaskFT achieves high performance, it requires training separate models per task, making it computationally impractical. MTL represents an idealized scenario, training on all tasks simultaneously, thus serving as an upper-bound performance indicator. A comparison with model merging methods, SLERP and TIES, is provided in Appendix~\ref{appendix:model_merging}, with corresponding results included in Table~\ref{tab:results_summary}.

\subsection{Performance on the TRACE Benchmark}
To further illustrate our method's capability in more realistic continual learning environments, we evaluate it on TRACE~\cite{wang2023trace}, which includes diverse and challenging instruction-tuned tasks across multilingual understanding, domain-specific knowledge, arithmetic reasoning, and coding.

\begin{table}[h]
\centering
\caption{TRACE benchmark performance using LLaMA-2-7B-Chat.}
\label{tab:trace_results}
\resizebox{0.8\textwidth}{!}{%
\begin{tabular}{lcc}
\toprule
\textbf{Method} & \textbf{Average Accuracy (\%)} & \textbf{Backward Transfer (\%)} \\
\midrule
SeqFT & 23.0 & -8.3 \\
O-LoRA & 41.3 & 6.2 \\
\textbf{Ours (Adaptive SVD)} & \textbf{48.4} & \textbf{7.1} \\
\midrule
PerTaskFT & 57.6 & NA \\
MTL & 52.3 & NA \\
\bottomrule
\end{tabular}}
\end{table}

Results in Table~\ref{tab:trace_results} emphasize our method's ability to effectively retain and transfer knowledge across tasks. Our approach achieves notably higher average accuracy and backward transfer compared to O-LoRA, demonstrating superior robustness to forgetting, critical for practical deployments.

% \begin{figure}[h]
%     \centering
%     \includegraphics[width=0.6\textwidth]{}
%     \caption{Comparison of general ability scores across six diverse evaluation tasks between the base LLaMA-2-7B-Chat model and our adaptive SVD-based continual learner.}
%     \label{fig:general_ability_plot}
% \end{figure}

\begin{table}[h]
  \centering
  % Reduce horizontal and vertical spacing to save space.
  % \setlength{\tabcolsep}{5pt}
  \renewcommand{\arraystretch}{0.95}
  \caption{Comparison of general ability scores across six diverse evaluation tasks between the base LLaMA-2-7B chat model and our adaptive SVD-based continual learner.}
  \label{tab:general_ability_comparison}
  \begin{tabular}{lcccccc}
    \toprule
    \textbf{Model} & \textbf{MMLU} & \textbf{GSM} & \textbf{BBH} & \textbf{TydiQA} & \textbf{BoolQA} & \textbf{PIQA} \\
    \midrule
    Base Instruct Model 
      & 46.6 & \textbf{26.1} & \textbf{40.2} & 23.5 & 70.5 & 76.2 \\
    Ours (Adaptive SVD) 
      & \textbf{47.7} & 7.7 & 34.2 & \textbf{35.8} & \textbf{76.6} & \textbf{77.6} \\
    \bottomrule
  \end{tabular}
\end{table}

\textbf{Retention of General Capabilities and Safety.} We explicitly evaluate the preservation of general abilities, instruction-following, and safety after continual learning using benchmarks proposed by TRACE. Table~\ref{tab:general_ability_comparison} illustrates our method's effectiveness in preserving or enhancing core language capabilities compared to the original instruction-tuned model. Our approach retains multilingual comprehension and reasoning abilities exceptionally well, a key differentiator for real-world applicability. Table~\ref{tab:trace_help_safety_breakdown} demonstrates that our approach retains superior instruction-following ability and safety performance compared to baselines.

\begin{table}[htbp]
\centering
\caption{Win / Tie / Lose breakdown (\%) for instruction-following and safety evaluations against the LLaMA-2-7B-Chat base model.}
\label{tab:trace_help_safety_breakdown}
\begin{tabular}{lccc|ccc}
\toprule
\multirow{2}{*}{\textbf{Method}} & \multicolumn{3}{c|}{\textbf{Instruction (Helpfulness)}} & \multicolumn{3}{c}{\textbf{Safety}} \\
 & Win & Tie & Lose & Win & Tie & Lose \\
\midrule
Replay & 10 & 18 & 72 & 0 & 88 & 12 \\
LoRASeqFT & 3 & 4 & 94 & 0 & 86 & 14 \\
SeqFT & 14 & 34 & 53 & 0 & 98 & 2 \\
\textbf{Ours (Adaptive SVD)} & \textbf{24} & \textbf{56} & \textbf{20} & \textbf{18} & \textbf{78} & \textbf{4} \\
\bottomrule
\end{tabular}
\end{table}

\section{Conclusion}
\label{sec:conclusion}
\vspace{-2mm}
As large language models (LLMs) become increasingly central to real-world applications, continually adapting them without erasing prior knowledge is essential. We presented a novel continual learning framework that uses adaptive singular value decomposition (SVD) to isolate low-rank subspaces for new tasks while preserving critical directions for previously acquired knowledge. Unlike parameter-efficient techniques that freeze most weights or add modules per task, our method operates on \emph{all} model parameters with fixed memory, preventing catastrophic forgetting through orthogonal subspace updates. Extensive empirical evaluations demonstrate our method's effectiveness across diverse benchmarks: (1) \emph{On the 5-task benchmark with LLaMA-2 7B}, we achieved \textbf{79.6\%} accuracy, surpassing the current SOTA by over 3 percentage points; (2) \emph{or the challenging 15-task sequence with T5-Large}, we reached \textbf{71.3\%} accuracy, outperforming all parameter-efficient competitors; (3) \emph{On the realistic TRACE benchmark with LLaMA-2 7B-Chat}, our method attained \textbf{48.4\%} average accuracy without requiring simultaneous multi-task access or multiple specialized models. Crucially, our approach preserved general capabilities, instruction-following behavior, and safety throughout continual learning—essential properties for deployment in production environments. Our adaptive SVD method provides a mathematically principled solution to the fundamental tension between stability and plasticity in neural networks, offering a scalable path toward continuously evolving language models that efficiently accumulate knowledge without forgetting. By demonstrating that full parameter updates can be performed without compromising previously acquired knowledge, our work challenges a central assumption in continual learning and establishes a new optimal approach for real-world deployment of continually adapting language models.

\noindent\textbf{Limitations and Future Work.}\;
Although our approach achieves strong results, three challenges merit further study:
\textbf{(1) Rank Estimation Sensitivity:} Performance drops sharply under inaccurate rank selection (Appendix~\ref{appendix:ablations}), suggesting the need for more principled, data-driven methods to determine effective rank;
\textbf{(2) Dynamic Capacity Allocation:} Pre-allocating subspace budgets can hinder long-horizon task streams, so flexible allocation or adaptive subspace management could improve scalability;
\textbf{(3) Computational Overheads:} While our method avoids unbounded parameter growth, repeated SVD can be costly, and restricting these operations to specific layers (e.g., attention projections) may improve efficiency.
Addressing these directions should pave the way for more robust, scalable, and theoretically grounded continual learners that efficiently integrate new tasks without sacrificing previously acquired knowledge.

\bibliography{main}
\bibliographystyle{colm2025_conference}

\newpage
\appendix
\section{Appendix}
\subsection{Theoretical Analysis: Tighter Forgetting Bounds via Adaptive SVD}
\label{appendix:theorem_tight_bounds}

We now formally derive a hierarchy of catastrophic forgetting bounds that rigorously demonstrate the advantage of our adaptive rank selection approach compared to both naive full fine-tuning and uniform low-rank projection methods. \textit{In essence, this section shows how protecting high-curvature directions (i.e., large Hessian eigenvalues) minimizes forgetting—motivating our subsequent use of weight-matrix SVD as a tractable approximation.}

\begin{lemma}[Second-Order Approximation of Catastrophic Forgetting]
\label{lemma:second_order_approx}
Consider a model with parameters \(\theta^{(k)}\) after training on task \(k\), and subsequent parameters \(\theta^{(k+1)} = \theta^{(k)} + \Delta\theta\) after learning task \(k+1\). Assuming \(\nabla L_k(\theta^{(k)}) \approx 0\) (i.e., task \(k\)'s loss is near-optimal at \(\theta^{(k)}\)), the catastrophic forgetting on task \(k\) can be approximated by:
\begin{equation}
\Delta L_k \triangleq L_k(\theta^{(k+1)}) - L_k(\theta^{(k)})
\approx
\frac{1}{2}\Delta\theta^\top H_k \Delta\theta,
\end{equation}
where \(H_k=\nabla^2 L_k(\theta^{(k)})\) is the Hessian of the loss function at \(\theta^{(k)}\).
\end{lemma}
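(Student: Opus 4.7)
The plan is to obtain the bound via a straightforward second-order Taylor expansion of $L_k$ around the post-training iterate $\theta^{(k)}$, and then apply the two stated assumptions to eliminate the lower- and higher-order terms. Since the result is local in nature, I would keep the argument at the level of a smooth-function expansion and flag the regime under which the remainder is negligible.

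First, assuming $L_k$ is $C^3$ (or at least twice continuously differentiable with a controlled third derivative) in a neighborhood of $\theta^{(k)}$, I would write
\begin{equation*}
L_k(\theta^{(k)} + \Delta\theta) = L_k(\theta^{(k)}) + \nabla L_k(\theta^{(k)})^\top \Delta\theta + \tfrac{1}{2}\Delta\theta^\top H_k \Delta\theta + R(\Delta\theta),
\end{equation*}
where $H_k = \nabla^2 L_k(\theta^{(k)})$ and the remainder satisfies $R(\Delta\theta) = O(\|\Delta\theta\|^3)$ by Taylor's theorem with the integral form of the remainder. Rearranging gives the difference $\Delta L_k = L_k(\theta^{(k+1)}) - L_k(\theta^{(k)})$ as the sum of a linear, a quadratic, and a cubic-order term.

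Next, I would invoke the two hypotheses that do the work. The assumption $\nabla L_k(\theta^{(k)}) \approx 0$ — which is precisely the statement that $\theta^{(k)}$ is a (near-)stationary point of $L_k$ after training on task $k$ — kills the linear term, leaving only the quadratic form $\tfrac{1}{2}\Delta\theta^\top H_k \Delta\theta$ plus the cubic remainder. Dropping the remainder is justified whenever $\|\Delta\theta\|$ is small relative to the local Lipschitz constant of $H_k$; this is the continual-learning regime of interest, where the update for task $k{+}1$ is a modest perturbation of $\theta^{(k)}$ rather than a full reinitialization. Collecting these two steps yields the stated approximation.

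The main obstacle is not algebraic but conceptual: making precise what "$\approx$" means in the hypothesis and in the conclusion. I would either (a) state the lemma as an exact equality up to an $O(\|\nabla L_k(\theta^{(k)})\|\,\|\Delta\theta\|) + O(\|\Delta\theta\|^3)$ error and then argue both error sources are dominated by the retained quadratic term in the relevant regime, or (b) present it informally as a heuristic second-order surrogate, which is how it is used downstream to motivate protecting high-curvature (large-eigenvalue) directions of $H_k$. Either framing suffices for the subsequent Hessian-to-SVD approximation argument; I would pick (a) for rigor and note that a bounded third derivative of $L_k$ on a neighborhood of $\theta^{(k)}$ is the only regularity needed.
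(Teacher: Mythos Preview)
Your proposal is correct and follows essentially the same approach as the paper: a second-order Taylor expansion of $L_k$ at $\theta^{(k)}$, elimination of the first-order term via the near-optimality assumption $\nabla L_k(\theta^{(k)})\approx 0$, and absorption of the $O(\|\Delta\theta\|^3)$ remainder. Your additional care about the regularity hypotheses and the explicit error bookkeeping in option (a) goes slightly beyond what the paper states, but the underlying argument is the same.
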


\begin{proof}
\textbf{Step 1: Taylor Expansion.}  
Expanding \(L_k\) at \(\theta^{(k+1)} = \theta^{(k)} + \Delta\theta\) via Taylor's theorem:
\begin{equation}
L_k(\theta^{(k+1)}) 
=
L_k(\theta^{(k)}) 
+ 
\underbrace{\nabla L_k(\theta^{(k)})^\top \Delta\theta}_{\approx 0}
+ 
\frac{1}{2}\Delta\theta^\top H_k \Delta\theta + O(\|\Delta\theta\|^3).
\end{equation}

\textbf{Step 2: First-Order Term Vanishes.}  
Since \(\theta^{(k)}\) represents a (local) optimum for task \(k\), we have \(\nabla L_k(\theta^{(k)}) \approx \mathbf{0}\), thereby eliminating the first-order term.

\textbf{Step 3: Dominant Quadratic Term.}  
The remaining quadratic term $\tfrac12\,\Delta\theta^\top H_k \Delta\theta$ dominates forgetting.
\end{proof}

\begin{lemma}[Block-Diagonal Approximation of the Hessian]
\label{lemma:block_diagonal}
Consider a Transformer model with parameters partitioned into layers such that:
\[
\theta = \left[\mathrm{vec}(W^{(1)})^\top, \mathrm{vec}(W^{(2)})^\top, \dots, \mathrm{vec}(W^{(L)})^\top\right]^\top.
\]
The Hessian matrix \(H_k\) at the optimum \(\theta^{(k)}\) can be approximated as block-diagonal with respect to layers:
\begin{equation}
H_k \approx \begin{bmatrix}
H_k^{(1)} & 0 & \cdots & 0 \\
0 & H_k^{(2)} & \cdots & 0 \\
\vdots & \vdots & \ddots & \vdots \\
0 & 0 & \cdots & H_k^{(L)}
\end{bmatrix},
\end{equation}
where each \(H_k^{(\ell)}\) represents the intra-layer Hessian for layer \(\ell\). Under this approximation, the quadratic form decomposes as:
\begin{equation}
\Delta\theta^\top H_k \Delta\theta \approx \sum_{\ell=1}^{L} \mathrm{vec}(\Delta W^{(\ell)})^\top H_k^{(\ell)} \, \mathrm{vec}(\Delta W^{(\ell)}).
\end{equation}
\end{lemma}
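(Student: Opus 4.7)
The plan is to first write the Hessian in the block form induced by the layer partition of $\theta$, then give the standard K-FAC / empirical-Fisher argument for why off-diagonal blocks can be discarded; once that approximation is granted, the quadratic-form identity is pure linear algebra.

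First I would index parameter coordinates by pairs $(\ell,a)$, where $\ell$ labels a layer and $a$ indexes an entry within $\mathrm{vec}(W^{(\ell)})$, so that the Hessian splits naturally into blocks $[H_k]_{\ell m}=\partial^2 L_k/\partial\,\mathrm{vec}(W^{(\ell)})\,\partial\,\mathrm{vec}(W^{(m)})^\top$. Using the chain rule through the forward pass, each off-diagonal block $[H_k]_{\ell m}$ with $\ell\neq m$ decomposes into two contributions: a Gauss--Newton piece that is an outer product of the network-output Jacobians with respect to $W^{(\ell)}$ and $W^{(m)}$, and a residual piece weighted by the loss gradient with respect to intermediate activations. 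The residual piece carries a factor that is small near a task-$k$ optimum, for exactly the same reason Lemma~\ref{lemma:second_order_approx} is allowed to discard its first-order term; the Gauss--Newton piece is Kronecker-factored in the layerwise activations and back-propagated errors.

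The second step is to invoke approximate layer-wise decorrelation of activations and back-propagated errors at distinct layers, which zeroes the cross-layer Gauss--Newton blocks in expectation and leaves exactly the block-diagonal surrogate claimed in the lemma. This is the same approximation that underlies EWC~\citep{kirkpatrick2017overcoming} and the K-FAC line of work, and it is the point at which the lemma becomes an approximation rather than an identity. I would record this explicitly as an assumption (``cross-layer activation and error correlations are negligible at $\theta^{(k)}$'') so that the rest of the argument is deductive.

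Once block-diagonality is in hand, the last step is a one-line calculation: writing $\Delta\theta$ as the stack of $\mathrm{vec}(\Delta W^{(\ell)})$ and expanding $\Delta\theta^\top H_k \Delta\theta$ against the block-diagonal form, all cross-layer terms vanish and the claimed sum over layers appears immediately. The main obstacle is clearly step two: there is no clean quantitative bound on cross-layer Hessian blocks for deep nonlinear networks in general, so I would justify the decorrelation by citing the established K-FAC/EWC literature and treat it as a stated modeling assumption, rather than attempt a rigorous estimate that is not available at this level of generality.
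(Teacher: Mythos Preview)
Your approach and the paper's are in the same spirit: both ultimately rest on the K-FAC-style assumption that cross-layer curvature is negligible, and both treat this as a modeling approximation rather than a derived identity. The paper's argument, however, is purely citation-driven: it does not attempt your Gauss--Newton-plus-residual decomposition at all, but instead appeals directly to prior structural analyses of neural-network Hessians and to empirical measurements in Transformers (off-diagonal inter-layer blocks reported to have roughly $10\times$ smaller Frobenius norm than intra-layer blocks, with cross-layer correlations below $0.1$), then notes the Frobenius inner-product identity and cites K-FAC and structured Laplace methods as practical validation. Your route is more analytically structured; the paper's buys concrete quantitative support from the empirical literature in place of the decorrelation hypothesis you state abstractly.

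One technical slip in your breakdown: the residual piece in the Gauss--Newton split is weighted by $\partial \ell/\partial f$ (or, more generally, loss gradients with respect to network outputs or intermediate activations), not by $\nabla_\theta L_k$. Being at a parameter optimum gives $\nabla_\theta L_k = J^\top(\partial \ell/\partial f) \approx 0$, but this does not force $\partial \ell/\partial f$ itself to be small, so the residual does not vanish ``for exactly the same reason'' as the first-order term in Lemma~\ref{lemma:second_order_approx}. The standard justification for discarding it is instead that the Gauss--Newton term empirically dominates (or that per-sample output residuals are small when the model fits well), which is again an appeal to the empirical/K-FAC literature rather than a consequence of optimality. Your final linear-algebra step for the quadratic form is fine and matches the paper.
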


\begin{proof}
The block-diagonal approximation is theoretically justified by analyses showing the Hessian of neural networks, especially Transformers, is dominated by intra-layer terms with negligible cross-layer interactions~\citep{Singh2021HessianStructure, MartensGrosse2015KFAC}. Empirical evidence from Transformer models further supports this structure: Hessian spectrum analyses reveal minimal magnitude in off-diagonal inter-layer Hessian blocks compared to the intra-layer blocks~\citep{Zhang2024WhyAdam}.

\textbf{Empirical Validation:} As shown in \citet{Zhang2024WhyAdam}, inter-layer Hessian blocks in Transformers exhibit $\sim\!10\times$ smaller Frobenius norms than intra-layer blocks, with cross-layer correlations below $0.1$ in pretrained models. This justifies treating layers independently for curvature analysis.

\textbf{Norm Equivalence:} Note that $\mathrm{vec}(\Delta W^{(\ell)})^\top H_k^{(\ell)} \mathrm{vec}(\Delta W^{(\ell)})$ is equivalent to $\langle\Delta W^{(\ell)}, H_k^{(\ell)} \Delta W^{(\ell)}\rangle_F$, where $\langle\cdot,\cdot\rangle_F$ is the Frobenius inner product. Thus, the quadratic form directly ties to layer-wise Frobenius norms.

In practice, optimization and continual learning algorithms that assume a block-diagonal Hessian, such as Kronecker-Factored Approximate Curvature (K-FAC)~\citep{MartensGrosse2015KFAC} and structured Laplace approximations~\citep{Ritter2018OSLA}, consistently demonstrate effectiveness in leveraging layer-wise curvature without significant loss of accuracy. Thus, the approximation is both theoretically sound and empirically validated.
\end{proof}

\begin{lemma}[Relationship Between Layer Importance and Curvature]
\label{lemma:importance_curvature}
The layer importance measure \(I^{(\ell)}\), defined as:
\begin{equation}
I^{(\ell)} = \frac{1}{N}\sum_{i=1}^{N} \text{cosine\_similarity}(X_i^{(\ell)}, Y_i^{(\ell)})
\end{equation}
where \(X_i^{(\ell)}\) are layer inputs and \(Y_i^{(\ell)} = W^{(\ell)}X_i^{(\ell)}\) are layer outputs, positively correlates with the spectral properties of the layer-wise Hessian \(H_k^{(\ell)}\).
\end{lemma}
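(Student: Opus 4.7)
The plan is to establish the correlation in two conceptual steps: first relate $I^{(\ell)}$ to the alignment of the empirical input covariance $\Sigma_X^{(\ell)} = \tfrac{1}{N}\sum_i X_i^{(\ell)}(X_i^{(\ell)})^\top$ with the dominant right singular subspace of $W^{(\ell)}$, and then relate that alignment to the spectrum of the intra-layer Hessian $H_k^{(\ell)}$ via a Gauss--Newton / Fisher-information-style Kronecker factorization. This two-step strategy lets me avoid ever writing $H_k^{(\ell)}$ explicitly and instead chain two monotone relationships.

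First I would write $W^{(\ell)} = U\Sigma V^\top$ and reparameterize inputs in the right-singular basis, $\tilde X_i = V^\top X_i^{(\ell)}$. A direct expansion then gives
\[
\text{cosine\_similarity}(X_i^{(\ell)}, Y_i^{(\ell)}) \;=\; \frac{\sum_j \sigma_j\, \tilde X_{ij}^2}{\|\tilde X_i\|\,\sqrt{\sum_j \sigma_j^2\, \tilde X_{ij}^2}} ,
\]
which is a weighted average of the $\sigma_j$ and is close to its maximum precisely when the energy of $\tilde X_i$ concentrates on indices with the largest $\sigma_j$. Averaging over $i$ and recognizing the result as a quotient involving quadratic forms in $\Sigma_X^{(\ell)}$, a high value of $I^{(\ell)}$ implies that $\Sigma_X^{(\ell)}$ puts most of its mass in the dominant right-singular subspace of $W^{(\ell)}$; a short Cauchy--Schwarz argument turns this into a quantitative lower bound on $v_{\max}^\top \Sigma_X^{(\ell)} v_{\max}$ where $v_{\max}$ is the top right-singular vector.

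Next I would invoke the Gauss--Newton / Kronecker-factored form of the layer-wise Hessian used by K-FAC (already cited in Lemma~\ref{lemma:block_diagonal}),
\[
H_k^{(\ell)} \;\approx\; \Sigma_X^{(\ell)} \otimes G^{(\ell)}, \qquad G^{(\ell)} = \mathbb{E}\!\left[g_i^{(\ell)} (g_i^{(\ell)})^\top\right],
\]
where $g_i^{(\ell)}$ is the pre-activation gradient backpropagated from downstream layers. Because the spectrum of a Kronecker product is the set of products of eigenvalues of its factors, both $\|H_k^{(\ell)}\|_2$ and $\mathrm{tr}(H_k^{(\ell)})$ are monotone in the top eigenvalues of $\Sigma_X^{(\ell)}$. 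Chaining this with the first step yields the desired positive correlation between $I^{(\ell)}$ and the spectral properties of $H_k^{(\ell)}$, with an additional chain-rule observation that layers preserving direction ($I^{(\ell)} \approx 1$) propagate signal nearly isometrically along the dominant subspace and therefore receive large downstream gradients aligned with exactly those directions, so $G^{(\ell)}$ only reinforces the effect. I would close by citing \citet{Haink2023HessianEA} for the empirical Hessian--SVD correspondence that the proof ultimately leverages.

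The main obstacle is controlling the $G^{(\ell)}$ factor, since it depends on the entire downstream network and is not directly measured by $I^{(\ell)}$. I would handle this in one of two ways, depending on how tight a statement is needed: (i) argue qualitatively that downstream gradients are necessarily supported on the signal subspace, which is enough to conclude a monotone (rather than quantitative) relationship, matching the wording ``positively correlates'' in the lemma; or (ii) for a sharper statement, assume a loss-agnostic lower bound $\lambda_{\min}(G^{(\ell)}) \geq c > 0$ on the subspace carrying signal, under which $\lambda_{\max}(H_k^{(\ell)}) \geq c \cdot \lambda_{\max}(\Sigma_X^{(\ell)})$ and the inequality transfers cleanly through the Kronecker factorization. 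Either route suffices to motivate the adaptive rank rule $r^{(\ell)} = \mathrm{mrr} + I^{(\ell)}(\mathrm{trr}-\mathrm{mrr})$ of Section~\ref{subsec:adaptive_rank} as a practical proxy for protecting the highest-curvature directions identified in Lemma~\ref{lemma:second_order_approx}.
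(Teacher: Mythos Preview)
Your route is considerably more formal than the paper's. The paper's ``proof'' is purely heuristic: it argues in words that layers with high input--output similarity act as information conduits, so perturbing their weights directly distorts task-critical features and therefore corresponds to large Hessian eigenvalues, and then appeals to empirical observations in \citet{sharma2023truth} and \citet{li2025adasvdadaptivesingularvalue} rather than deriving anything. There is no analytic link between $I^{(\ell)}$ and the Hessian spectrum in the paper at all, so your K-FAC strategy is a genuinely different and more ambitious attempt.

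That said, your first step contains a concrete error. With $W^{(\ell)} = U\Sigma V^\top$ and $\tilde X_i = V^\top X_i^{(\ell)}$, the numerator of the cosine similarity is
\[
X_i^\top W^{(\ell)} X_i \;=\; (U^\top X_i)^\top \Sigma\, (V^\top X_i) \;=\; \sum_j \sigma_j\, \hat X_{ij}\,\tilde X_{ij},
\qquad \hat X_i := U^\top X_i,
\]
not $\sum_j \sigma_j \tilde X_{ij}^2$ as you wrote. Your expression is valid only when $U=V$, i.e.\ when $W^{(\ell)}$ is symmetric positive semidefinite, which transformer projection matrices are not. Consequently $I^{(\ell)}\approx 1$ does not imply that input energy concentrates on the top \emph{right-singular} directions of $W^{(\ell)}$; it implies that the inputs are near eigenvectors of $W^{(\ell)}$ with positive real eigenvalue, a statement about the (generally non-normal) eigenstructure that need not align with the singular structure. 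This severs the chain from $I^{(\ell)}$ to a lower bound on $v_{\max}^\top \Sigma_X^{(\ell)} v_{\max}$, and hence to $\lambda_{\max}(\Sigma_X^{(\ell)})$ in the Kronecker factor, so even granting the K-FAC approximation the monotone relationship does not follow as written. The paper sidesteps this difficulty entirely by never attempting an analytic derivation and resting on the cited empirical correlation instead.
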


\begin{proof}
Layers with high importance scores (high similarity between inputs and outputs) tend to preserve activation patterns rather than significantly transform them. These layers typically serve as information conduits in the network, maintaining critical features learned for task \(k\). 

Empirically, these high-importance layers exhibit higher sensitivity to parameter perturbations. When a layer primarily passes information forward with minimal transformation (high \(I^{(\ell)}\)), perturbations to its parameters directly interfere with this information flow, causing large changes in the loss function. Mathematically, this translates to larger eigenvalues in \(H_k^{(\ell)}\), indicating steeper curvature.

Conversely, layers with lower \(I^{(\ell)}\) values significantly transform their inputs, suggesting these layers are more adaptable. Perturbations to these layers' parameters cause smaller changes in the loss landscape, resulting in smaller eigenvalues in \(H_k^{(\ell)}\).

This relationship has been verified empirically in multiple studies \citep{sharma2023truth, li2025adasvdadaptivesingularvalue}, consistently showing a positive correlation between measures of layer importance and the magnitude of Hessian eigenvalues.

\textbf{Intuition:} Consider a layer that merely passes input features (high $I^{(\ell)}$). Perturbing its weights $W^{(\ell)}$ directly distorts critical task-$k$ features, causing large loss changes (high curvature). In contrast, layers transforming inputs (low $I^{(\ell)}$) allow parameter changes without catastrophic feature distortion, corresponding to flatter curvature.
\end{proof}

\noindent \textbf{Preserving Large Hessian Eigenvalues Minimizes Forgetting.}  
Combining these lemmas, we see that \textit{directions with large Hessian eigenvalues} impose the greatest risk for catastrophic forgetting: even small updates along those directions yield substantial loss increases for old tasks.

\begin{theorem}[Hierarchy of Forgetting Bounds]
\label{thm:forgetting_bounds_hierarchy}
Assuming equal parameter update magnitudes \(\|\Delta\theta\|^2 = c\) across different fine-tuning strategies, the forgetting bounds satisfy:
\begin{equation}
\text{Adaptive SVD} < \text{Fixed-Rank} < \text{Full Fine-tuning}
\end{equation}
Specifically:
\begin{align}
\text{Full Fine-tuning:} & \quad \Delta L_k \leq \frac{1}{2}\lambda_{\max}(H_k) \cdot c, \\
\text{Fixed-rank:} & \quad \Delta L_k \leq \frac{1}{2}\max_\ell\{\lambda_{r+1}^{(\ell)}\} \cdot c, \\
\text{Adaptive (Ours):} & \quad \Delta L_k \leq \frac{1}{2}\max_\ell\{\lambda_{r(\ell)+1}^{(\ell)}\} \cdot c,
\end{align}
where \(r(\ell) = \text{mrr} + I^{(\ell)}(\text{trr} - \text{mrr})\) is our adaptive rank allocation based on layer importance.

Moreover, under the condition that layer importance \(I^{(\ell)}\) positively correlates with Hessian curvature (Lemma \ref{lemma:importance_curvature}), we have:
\begin{equation}
\max_\ell\{\lambda_{r(\ell)+1}^{(\ell)}\} < \max_\ell\{\lambda_{r+1}^{(\ell)}\} \leq \lambda_{\max}(H_k),
\end{equation}
ensuring our adaptive approach provides strictly tighter forgetting bounds. 
\end{theorem}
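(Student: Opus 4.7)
The plan is to chain together the three preceding lemmas with a Rayleigh--Courant--Fischer argument applied layer by layer, and then upgrade the weak inequalities to the strict hierarchy using the importance--curvature correlation of Lemma~\ref{lemma:importance_curvature}.

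\textbf{Setup.} First I would apply Lemma~\ref{lemma:second_order_approx} to replace $\Delta L_k$ by the quadratic form $\tfrac12\Delta\theta^\top H_k\Delta\theta$, then invoke Lemma~\ref{lemma:block_diagonal} to split this into a sum of per-layer quadratic forms
\[
\Delta L_k \;\approx\; \tfrac12 \sum_{\ell=1}^{L} \mathrm{vec}(\Delta W^{(\ell)})^\top H_k^{(\ell)}\, \mathrm{vec}(\Delta W^{(\ell)}),
\]
together with the budget constraint $\sum_\ell \|\Delta W^{(\ell)}\|_F^2 = c$. All three bounds in the theorem then reduce to bounding this layer-wise sum under the appropriate subspace restrictions on each $\Delta W^{(\ell)}$.

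\textbf{Deriving the three bounds.} For full fine-tuning no subspace constraint applies, so a direct Rayleigh bound gives $\Delta\theta^\top H_k\Delta\theta \leq \lambda_{\max}(H_k)\cdot c$, matching the first line of the theorem. For the fixed- and adaptive-rank strategies I would invoke the weight-SVD-to-Hessian correspondence discussed in Section~\ref{subsec:theoretical_justification}: the projector in Eq.~\eqref{eq:orthogonal_projection} built from the top singular vectors of $W^{(\ell)}$ approximates the projector onto the top Hessian eigenvectors of $H_k^{(\ell)}$. After projection, $\mathrm{vec}(\Delta W^{(\ell)})$ is confined to the orthogonal complement of the top-$r$ (resp.\ top-$r(\ell)$) eigenspace of $H_k^{(\ell)}$, so Courant--Fischer bounds the per-layer quadratic form by $\lambda_{r+1}^{(\ell)}\|\Delta W^{(\ell)}\|_F^2$ (resp.\ $\lambda_{r(\ell)+1}^{(\ell)}\|\Delta W^{(\ell)}\|_F^2$). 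Pulling $\max_\ell$ out of the layer-wise sum and using the budget $\sum_\ell \|\Delta W^{(\ell)}\|_F^2 = c$ then yields the fixed-rank and adaptive bounds.

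\textbf{Strict hierarchy.} To get strict inequality I would pick any $\ell^\star \in \arg\max_\ell \lambda_{r+1}^{(\ell)}$ and appeal to Lemma~\ref{lemma:importance_curvature}: the layer with the largest residual Hessian eigenvalue is a high-curvature, hence high-importance, layer, so $I^{(\ell^\star)}$ sits near the top of the importance distribution and the adaptive schedule $r(\ell) = \mathrm{mrr} + I^{(\ell)}(\mathrm{trr}-\mathrm{mrr})$ assigns $r(\ell^\star) > r$ at that layer. Since the eigenvalues of $H_k^{(\ell^\star)}$ are in decreasing order, $\lambda_{r(\ell^\star)+1}^{(\ell^\star)} < \lambda_{r+1}^{(\ell^\star)}$, which forces the overall max to strictly drop. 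The outer inequality $\max_\ell \lambda_{r+1}^{(\ell)} \leq \lambda_{\max}(H_k)$ is immediate from block-diagonality, since the spectrum of $H_k$ is the union of the spectra of the $H_k^{(\ell)}$.

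\textbf{Main obstacle.} The delicate step is making the weight-SVD-to-Hessian-eigenspace correspondence precise enough that the Courant--Fischer argument produces an honest bound in Hessian eigenvalues rather than singular values; the paper treats this as an empirical approximation (citing \citet{Haink2023HessianEA}), so the cleanest presentation is either to state the bounds in terms of \emph{approximate} Hessian eigenvalues, or to add an explicit assumption quantifying the principal-angle alignment between the leading singular subspaces of $W^{(\ell)}$ and the leading eigensubspaces of $H_k^{(\ell)}$, and then absorb the residual misalignment into a lower-order term. A secondary subtlety is that the strict inequality implicitly presupposes comparable rank budgets (i.e.\ $\tfrac{1}{L}\sum_\ell r(\ell) \approx r$), so that adaptive protection does not come ``for free'' by uniformly shrinking the adaptable subspace; without this normalization, one must instead state the hierarchy conditional on equal mean retention across the two schemes.
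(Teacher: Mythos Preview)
Your proposal is essentially the same approach as the paper's: Rayleigh quotient for the unconstrained bound, Courant--Fischer on the orthogonal complement for the rank-restricted bounds, block-diagonality to identify $\lambda_{\max}(H_k)$ with $\max_\ell \lambda_1^{(\ell)}$, and the $\ell^\star = \arg\max_\ell \lambda_{r+1}^{(\ell)}$ argument via Lemma~\ref{lemma:importance_curvature} for the strict inequality. Your ``Main obstacle'' paragraph is in fact more candid than the paper about the SVD--Hessian alignment being an approximation and about the implicit equal-budget normalization.

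One small gap worth closing: showing $\lambda_{r(\ell^\star)+1}^{(\ell^\star)} < \lambda_{r+1}^{(\ell^\star)}$ at the bottleneck layer does not by itself force $\max_\ell \lambda_{r(\ell)+1}^{(\ell)}$ to drop, since the adaptive max could migrate to a \emph{different} layer $\ell \neq \ell^\star$ where $r(\ell) < r$ (a low-importance layer receives less protection, so $\lambda_{r(\ell)+1}^{(\ell)}$ can exceed $\lambda_{r+1}^{(\ell)}$ there). The paper handles this with a short case split over the remaining layers---high-importance layers still have $r(\ell) > r$, while low-importance layers already satisfy $\lambda_{r+1}^{(\ell)} < \lambda_{r+1}^{(\ell^\star)}$---which, together with the importance--curvature correlation, keeps every $\lambda_{r(\ell)+1}^{(\ell)}$ strictly below $\lambda_{r+1}^{(\ell^\star)}$. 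You should add that case split; otherwise the strict inequality is asserted rather than proved.
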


\begin{proof}
We establish the hierarchy of bounds by proving each inequality separately.

\textbf{Part 1:} \(\max_\ell\{\lambda_{r+1}^{(\ell)}\} \leq \lambda_{\max}(H_k)\).  
By the block-diagonal approximation (Lemma~\ref{lemma:block_diagonal}), $\lambda_{\max}(H_k) = \max_\ell\{\lambda_1^{(\ell)}\}$. From Lemma~\ref{lemma:importance_curvature}, high-$I^{(\ell)}$ layers have larger $\lambda_1^{(\ell)}$. Since \(\lambda_{r+1}^{(\ell)} \leq \lambda_1^{(\ell)}\) for all \(\ell\) by the ordering of eigenvalues, we have:
\[
\max_\ell\{\lambda_{r+1}^{(\ell)}\} \leq \max_\ell\{\lambda_1^{(\ell)}\} = \lambda_{\max}(H_k).
\]

\textbf{Rayleigh Quotient Proof for Full Fine-tuning Bound:}
For the full fine-tuning case, we need to bound \(\Delta\theta^\top H_k \Delta\theta\). By the Rayleigh quotient property, for any symmetric matrix \(H_k\) and non-zero vector \(\Delta\theta\):
\[
\frac{\Delta\theta^\top H_k \Delta\theta}{\|\Delta\theta\|^2} \leq \lambda_{\max}(H_k),
\]
where \(\lambda_{\max}(H_k)\) is the largest eigenvalue of \(H_k\). This holds because the maximum value of the Rayleigh quotient equals the largest eigenvalue.

Rearranging, we get:
\[
\Delta\theta^\top H_k \Delta\theta \;\leq\; \lambda_{\max}(H_k) \cdot \|\Delta\theta\|^2 \;=\; \lambda_{\max}(H_k) \cdot c.
\]
Hence the forgetting bound for full fine-tuning is:
\[
\Delta L_k \approx \tfrac12\,\Delta\theta^\top H_k \,\Delta\theta \;\le\; \tfrac12\,\lambda_{\max}(H_k)\,\|\Delta\theta\|^2.
\]

\textbf{Part 2:} \(\max_\ell\{\lambda_{r(\ell)+1}^{(\ell)}\} < \max_\ell\{\lambda_{r+1}^{(\ell)}\}\).  

Let \(\ell^* = \operatorname{arg\max}_\ell \lambda_{r+1}^{(\ell)}\) be the layer with the largest post-projection eigenvalue in the fixed-rank approach. By Lemma \ref{lemma:importance_curvature}, this layer typically has high curvature and thus high importance \(I^{(\ell^*)}\). Under our adaptive allocation strategy, that high-importance layer obtains a larger rank allocation (\(r(\ell^*) > r\)), ensuring:
\[
\lambda_{r(\ell^*)+1}^{(\ell^*)} < \lambda_{r+1}^{(\ell^*)} = \max_\ell\{\lambda_{r+1}^{(\ell)}\}.
\]
For any other layer \(\ell \neq \ell^*\),
\[
\lambda_{r(\ell)+1}^{(\ell)} < \lambda_{r+1}^{(\ell^*)} = \max_\ell\{\lambda_{r+1}^{(\ell)}\},
\]
either because \(r(\ell) > r\) (for other high-importance layers) or because \(\lambda_{r+1}^{(\ell)} < \lambda_{r+1}^{(\ell^*)}\) (for low-importance layers).  
Hence \(\max_\ell\{\lambda_{r(\ell)+1}^{(\ell)}\} < \max_\ell\{\lambda_{r+1}^{(\ell)}\}\), implying a strictly tighter bound than fixed-rank.

Combining Parts 1 and 2 completes the proof of the bound hierarchy.
\end{proof}

\begin{equation}
\underbrace{\lambda_{r(\ell^*)+1}^{(\ell^*)}}_{\text{Adaptive (Ours)}} 
\;<\; 
\underbrace{\lambda_{r+1}^{(\ell^*)}}_{\text{Fixed-Rank}} 
\;\leq\; 
\underbrace{\lambda_1^{(\ell^*)}}_{\text{Full Fine-Tuning}},
\end{equation}
where $\ell^* = \arg\max_\ell \lambda_{r+1}^{(\ell)}$ is the highest-curvature layer.

\paragraph{On the Equal-Norm Assumption}  
The assumption \(\|\Delta\theta\|^2 = c\) across different fine-tuning strategies isolates the impact of update directions but does not imply optimality. In practice:
\begin{itemize}
    \item Adaptive SVD may achieve lower forgetting \emph{even with smaller norms} by avoiding high-curvature directions.
    \item Full fine-tuning could offset poor directional alignment with larger updates, but this risks catastrophic forgetting.
    \item Future work should analyze the Pareto frontier of the accuracy–forgetting trade-off under variable norms.
\end{itemize}
This assumption is purely a theoretical device, not a claim about how hyperparameters are tuned in practice.

\begin{tcolorbox}[colback=blue!5!white,colframe=blue!50!black,title=Key Theoretical Insights]
Under equal parameter update budgets:
\begin{itemize}
    \item Full fine-tuning suffers worst-case forgetting bounded by $\lambda_{\max}(H_k)$.
    \item Fixed-rank projection improves on this by capping directions via a uniform low-rank selection, but misallocates rank to some layers.
    \item Adaptive SVD aligns per-layer rank $r(\ell)$ with curvature (via $I^{(\ell)}$), giving strictly tighter forgetting bounds.  
\end{itemize}
\end{tcolorbox}

\begin{corollary}[Forgetting Reduction with Adaptive SVD]
\label{cor:forgetting_reduction}
Under the equal parameter update magnitude assumption, our adaptive SVD achieves strictly less forgetting than fixed-rank or naive full fine-tuning. This gap widens when:
\begin{itemize}
    \item Layer importance $I^{(\ell)}$ varies significantly across layers,
    \item The Hessian spectrum shows heavy tails (a few large eigenvalues dominate).
\end{itemize}
\end{corollary}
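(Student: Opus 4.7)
The plan is to derive the corollary as a direct consequence of Theorem~\ref{thm:forgetting_bounds_hierarchy}, then to analyze the two conditions under which the strict inequality becomes a substantial quantitative gap. First, I would invoke the hierarchy of bounds already established: under $\|\Delta\theta\|^2 = c$, the forgetting of each strategy is upper-bounded by $\tfrac{c}{2}$ times $\lambda_{\max}(H_k)$, $\max_\ell \lambda_{r+1}^{(\ell)}$, and $\max_\ell \lambda_{r(\ell)+1}^{(\ell)}$ respectively. Part~2 of the theorem's proof already supplies the strict inequality $\max_\ell \lambda_{r(\ell)+1}^{(\ell)} < \max_\ell \lambda_{r+1}^{(\ell)}$ whenever the adaptive rule assigns strictly more rank than $r$ to the maximizing layer $\ell^*$, so the only new content of the corollary is the qualitative characterization of when this gap widens.

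For the first widening condition (heterogeneous $I^{(\ell)}$), I would argue contrapositively: if all $I^{(\ell)}$ were equal, the normalization $\tfrac{1}{L}\sum_\ell I^{(\ell)} = 1$ forces $I^{(\ell)} \equiv 1$, so $r(\ell) \equiv \tfrac{\mathrm{mrr}+\mathrm{trr}}{2}$ and the adaptive and fixed-rank schemes coincide. Conversely, letting $\sigma_I^2 := \mathrm{Var}_\ell I^{(\ell)}$, the adaptive rule reallocates rank proportionally to $I^{(\ell)} - 1$, so the excess rank $r(\ell^*) - r$ on the high-curvature layer $\ell^*$ grows monotonically with $\sigma_I$. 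By Lemma~\ref{lemma:importance_curvature}, this excess rank is concentrated exactly where the eigenvalue curve $i \mapsto \lambda_i^{(\ell^*)}$ is steepest, so each additional protected direction peels off a strictly larger eigenvalue than it would in a low-importance layer. I would therefore bound
\[
\max_\ell \lambda_{r+1}^{(\ell)} - \max_\ell \lambda_{r(\ell)+1}^{(\ell)} \;\geq\; \lambda_{r+1}^{(\ell^*)} - \lambda_{r(\ell^*)+1}^{(\ell^*)},
\]
and observe that the right-hand side scales with the number of indices $r(\ell^*) - r$ swept over, which in turn scales with $\sigma_I(\mathrm{trr}-\mathrm{mrr})$.

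For the second widening condition (heavy-tailed spectrum), I would model the spectrum of $H_k^{(\ell^*)}$ by a decay profile $\lambda_i^{(\ell^*)} = \lambda_1^{(\ell^*)}\,\phi(i)$ with $\phi$ non-increasing and $\phi(1)=1$. Then the adaptive-versus-fixed gap at layer $\ell^*$ becomes $\lambda_1^{(\ell^*)}\bigl(\phi(r+1) - \phi(r(\ell^*)+1)\bigr)$, while the fixed-versus-full gap is $\lambda_1^{(\ell^*)}\bigl(1 - \phi(r+1)\bigr)$. Both gaps scale linearly with $\lambda_1^{(\ell^*)}$ and with the drop in $\phi$, so for rapidly decaying (heavy-tailed) spectra—e.g.\ power-law $\phi(i) = i^{-\alpha}$ with large $\alpha$, or low effective stable rank—the top-rank eigenvalues dominate the sum and the gap widens accordingly. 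Combining the two paragraphs, the corollary follows: the strict ordering holds whenever $\sigma_I > 0$ and the spectrum is non-degenerate, and both gaps grow in the claimed regimes.

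The main obstacle I anticipate is that the corollary's ``gap widens'' clause is stated informally, so the delicate step is choosing a sufficiently clean quantitative surrogate (variance of $I^{(\ell)}$, and a monotone spectral-decay parameter such as stable rank or power-law exponent $\alpha$) that both captures the intuition and is compatible with the block-diagonal, second-order framework inherited from Lemmas~\ref{lemma:second_order_approx}--\ref{lemma:importance_curvature}. A secondary subtlety is that the bound-level inequalities from Theorem~\ref{thm:forgetting_bounds_hierarchy} only transfer to actual forgetting if the update $\Delta\theta$ nontrivially aligns with the worst-case eigenvector; I would handle this by either restricting to adversarial $\Delta\theta$ (as in the theorem's Rayleigh-quotient argument) or by adding a mild alignment assumption, and would flag this explicitly rather than hide it inside the estimate.
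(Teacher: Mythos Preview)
Your proposal is correct in substance and in fact goes considerably beyond what the paper does. The paper's own proof is a single sentence: it invokes Theorem~\ref{thm:forgetting_bounds_hierarchy}, restates the hierarchy $\Delta L_k^{\text{Adaptive}} < \Delta L_k^{\text{Fixed-rank}} < \Delta L_k^{\text{Full}}$, and stops---the two ``gap widens'' bullets are left as unargued qualitative remarks. You, by contrast, actually supply mechanisms for those bullets: the contrapositive for uniform $I^{(\ell)}$, the variance-scaled excess rank $r(\ell^*)-r \propto \sigma_I(\mathrm{trr}-\mathrm{mrr})$, and the decay-profile model $\phi$ for the heavy-tailed case are all legitimate ways to turn the informal claims into quantitative ones. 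Your explicit flag that bound-level inequalities only transfer to actual forgetting under an alignment assumption on $\Delta\theta$ is a genuine subtlety that the paper simply elides.

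One minor arithmetic slip: under the normalization $\tfrac{1}{L}\sum_\ell I^{(\ell)}=1$, if all $I^{(\ell)}$ are equal then $I^{(\ell)}\equiv 1$, and the formula $r(\ell)=\mathrm{mrr}+I^{(\ell)}(\mathrm{trr}-\mathrm{mrr})$ gives $r(\ell)=\mathrm{trr}$, not $(\mathrm{mrr}+\mathrm{trr})/2$. This does not affect your conclusion that the adaptive and fixed-rank schemes collapse under uniform importance (they both become constant across layers), but the value should be corrected. A related loose end is that the paper never pins down what the uniform rank $r$ in the fixed-rank baseline actually is, so your comparison implicitly assumes some common budget; it would be worth stating that assumption explicitly.
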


\begin{proof}
Follows directly from Theorem~\ref{thm:forgetting_bounds_hierarchy} and the established bound hierarchy:
\[
\Delta L_k^{\text{Adaptive}} \;<\; \Delta L_k^{\text{Fixed-rank}} \;<\; \Delta L_k^{\text{Full}}.
\]
\end{proof}

\noindent \textbf{Practical Approximation via Weight-Matrix SVD.}  
While the above results show that \emph{retaining large Hessian-eigenvalue directions} is essential to minimize forgetting, \textit{computing} Hessian eigenvectors is intractable for large language models. Recent empirical findings~\citep{Haink2023HessianEA} indicate that these high-curvature directions often overlap significantly with top singular vectors of the weight matrices. Hence, our method uses SVD-based rank selection---preserving large singular values---as a pragmatic surrogate for preserving large Hessian eigenvalues. By focusing on lower singular-value directions for new-task updates, we effectively contain catastrophic forgetting without the prohibitive overhead of Hessian decomposition. This aligns with the theoretical ideal of limiting updates where curvature is highest, but in a computationally feasible manner.

\noindent This theoretical framework underpins our \emph{adaptive} SVD strategy: high-importance layers (with higher curvature) get more singular directions retained, while less critical layers can be more aggressively pruned. As shown in Section~\ref{sec:experimental_results}, this approach consistently outperforms naive full fine-tuning and uniform low-rank baselines in mitigating forgetting and stabilizing knowledge across tasks.

% You may include other additional sections here.
\subsection{Empirical Validation of Low Rank Approximation}
\label{subsec:low_rank_assumption}
We conducted an in-depth analysis of the Granite 8B model architecture to validate findings from prior literature suggesting that the weight matrices in transformer layers are effectively low-rank~\citep{sharma2023truth, hartford2024spectrumtargetedtrainingsignal}. This implies that these matrices can be accurately approximated using low-rank Singular Value Decomposition (SVD), revealing unused capacity that can potentially be leveraged to learn additional tasks or improve performance on existing ones. Since Granite shares a similar architecture with LLaMA, our findings are directly applicable to LLaMA and offer broader insights into decoder-only transformer architectures and large language models in general.

\begin{figure}[h]
    \centering
    \includegraphics[width=0.85\textwidth]{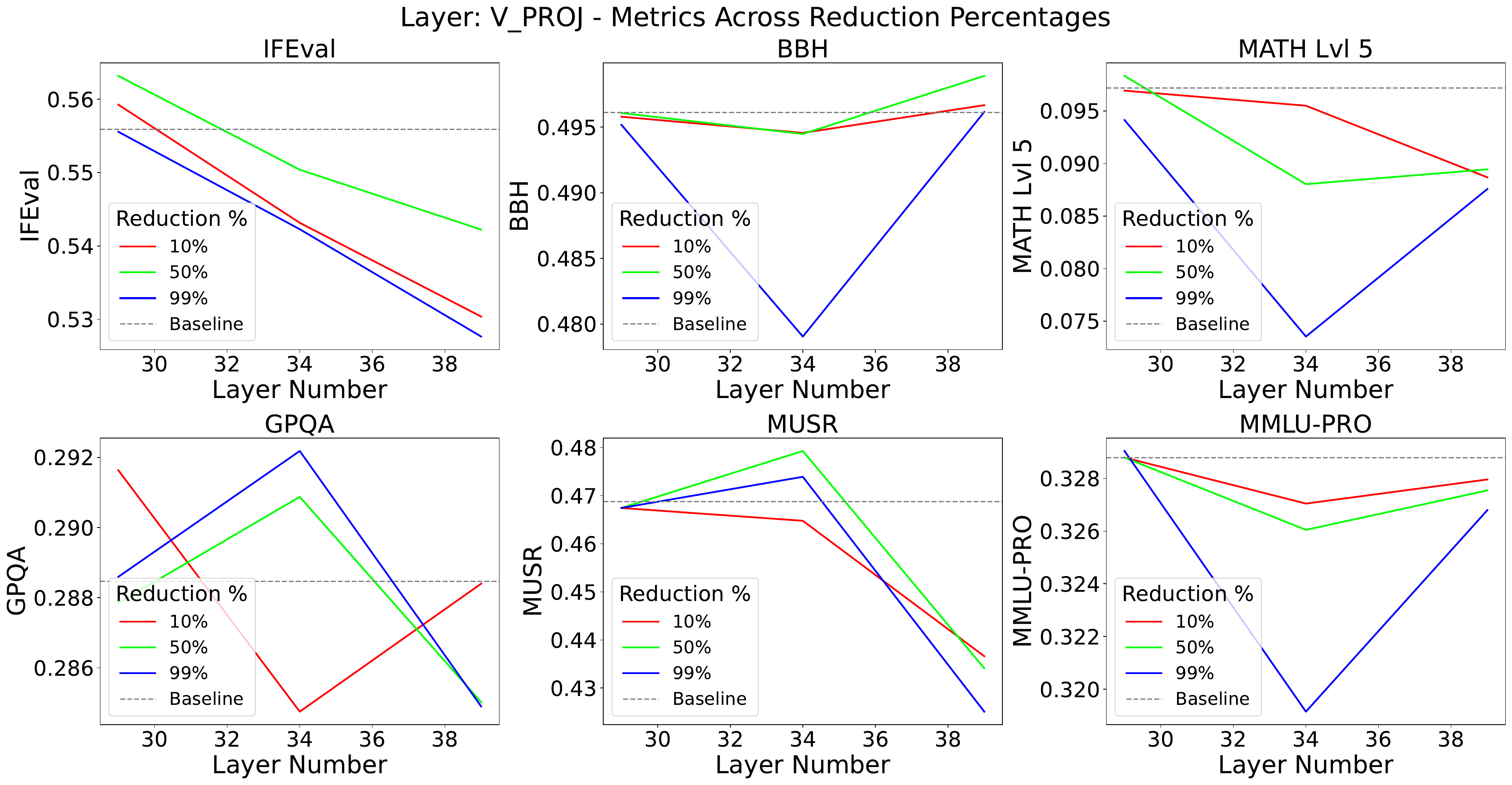}
    \caption{Leaderboard performance impact of low-rank approximations applied to the \texttt{attn.v\_proj.weight} (value projection matrix) across selected layers of Granite 8B.}
    \label{fig:value_proj_intervention}
\end{figure}

\begin{figure}[h]
    \centering
    \includegraphics[width=0.85\textwidth]{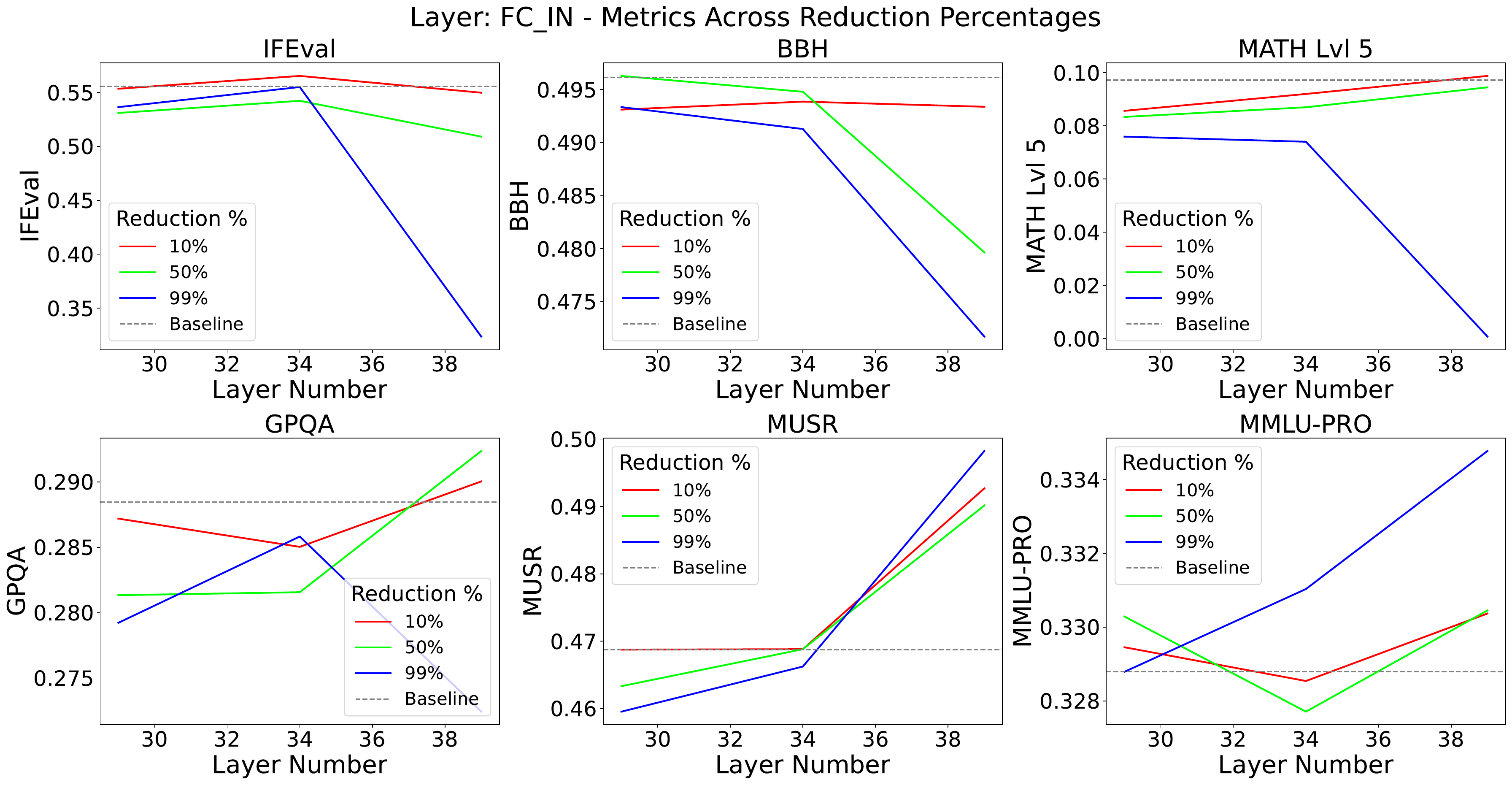}
    \caption{Leaderboard performance after low-rank approximations of the \texttt{mlp.gate\_proj.weight} (first feedforward projection) across layers.}
    \label{fig:fc_in_intervention}
\end{figure}

\begin{figure}[h]
    \centering
    \includegraphics[width=0.85\textwidth]{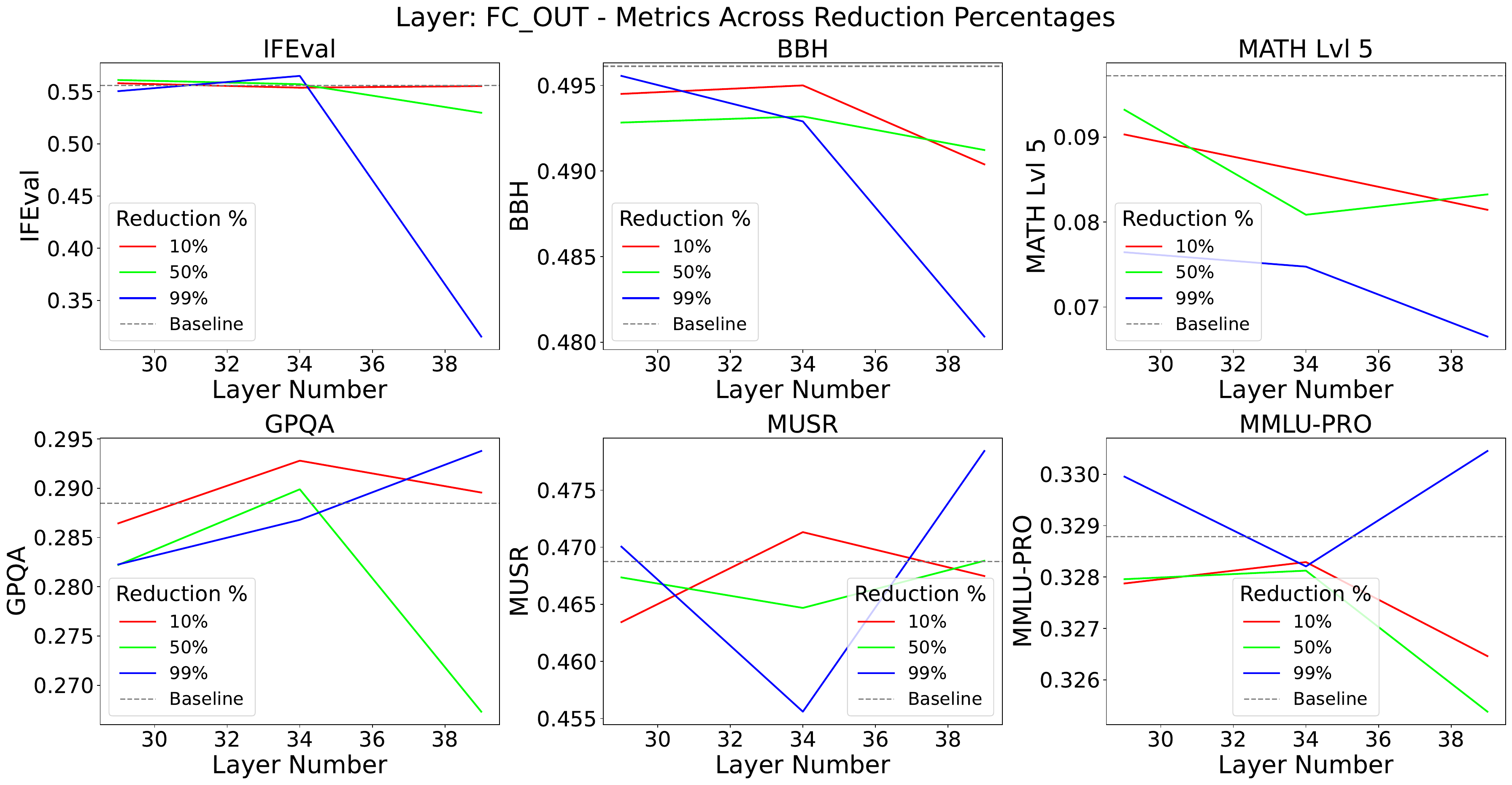}
    \caption{Effect of low-rank approximation on the \texttt{mlp.down\_proj.weight} (third feedforward projection) for later layers in Granite 8B, evaluated on the Leaderboard benchmark.}
    \label{fig:fc_out_intervention}
\end{figure}

\begin{table}[h]
\centering
\caption{Leaderboard average results for \texttt{attn.k\_proj.weight} across varying low-rank reduction levels. Middle layers showed slightly better robustness than early layers. The baseline here refers to the original Granite 8B model without any low-rank approximation.}
\label{tab:kproj_results}
\begin{tabular}{ccc}
\toprule
\textbf{Reduction Percentage} & \textbf{Above Baseline} & \textbf{Below Baseline} \\
\midrule
10\% & 3 & 9 \\
50\% & 4 & 8 \\
90\% & 2 & 10 \\
99\% & 2 & 10 \\
99.75\% & 0 & 11 \\
\bottomrule
\end{tabular}
\end{table}

We examined all attention and feedforward projection matrices across all layers of Granite 8B, and report results for four key matrices: the attention value and key projections, and the two feedforward projection matrices that follow attention. Based on prior observations from LASER~\cite{sharma2023truth} suggesting that later layers benefit most from rank reduction—often leading to improved downstream performance when high-frequency components are removed—we report findings from layers 28, 29, 34, and 39 out of the model's 40 layers. We performed SVD-based low-rank approximations at varying reduction levels (e.g., retaining only 1\%, 50\%, or 90\% of the original singular vectors), and evaluated the impact of each intervention on performance on the Open LLM Leaderboard v2 benchmark\footnote{\url{https://huggingface.co/docs/leaderboards/open_llm_leaderboard/about}} consisting of six tasks — MMLU-Pro, GPQA, MuSR, MATH, IFEval, and BBH. Consistent with prior work, we observed that some low-rank approximations maintained or even improved performance, highlighting the redundancy and compressibility of these matrices (see Figures~\ref{fig:value_proj_intervention},~\ref{fig:fc_in_intervention}, and~\ref{fig:fc_out_intervention} and Table~\ref{tab:kproj_results}). Each experiment involved a single intervention defined by a tuple specifying the layer number, matrix type, and reduction percentage.

To validate a core assumption underlying our method, we analyze the outputs of hidden layers along individual singular vector directions. Specifically, our method relies on the premise that fine-tuning in the directions of low singular vectors will not interfere with previously learned tasks. This assumption holds only if the data from earlier tasks lie predominantly in the subspace spanned by the high singular vectors. If task-specific information from earlier tasks resides in the span of the low singular vectors, modifying these directions could lead to interference—especially if the associated singular values were previously small (effectively suppressing higher-frequency components or noise), but are increased during learning on new tasks, thereby reactivating those suppressed directions. Formally, we expand the weight matrix via SVD as:

\begin{equation}
\mathbf{W} = \sum_{i=1}^{r} \sigma_i \, \mathbf{u}_i \mathbf{v}_i^\top
\end{equation}

To empirically verify this, we investigate whether the output components of previous tasks in the hidden layer, when projected onto the low singular vector subspace, are negligible. In particular, we compute the L2 norm of the matrix-vector product between the outer product of each singular vector pair $\mathbf{u}_i \mathbf{v}_i^\top$ and the input vector (from a previously learned task) without scaling by the corresponding singular value. This helps determine whether the old task input lies in the null space of the low singular vectors or merely yield small outputs due to low singular values. If the L2 norms of the matrix-vector products corresponding to low singular vectors are near zero, we can safely update these directions for new tasks without affecting the prior task.

\begin{figure}[h]
    \centering
    \includegraphics[width=0.85\textwidth]{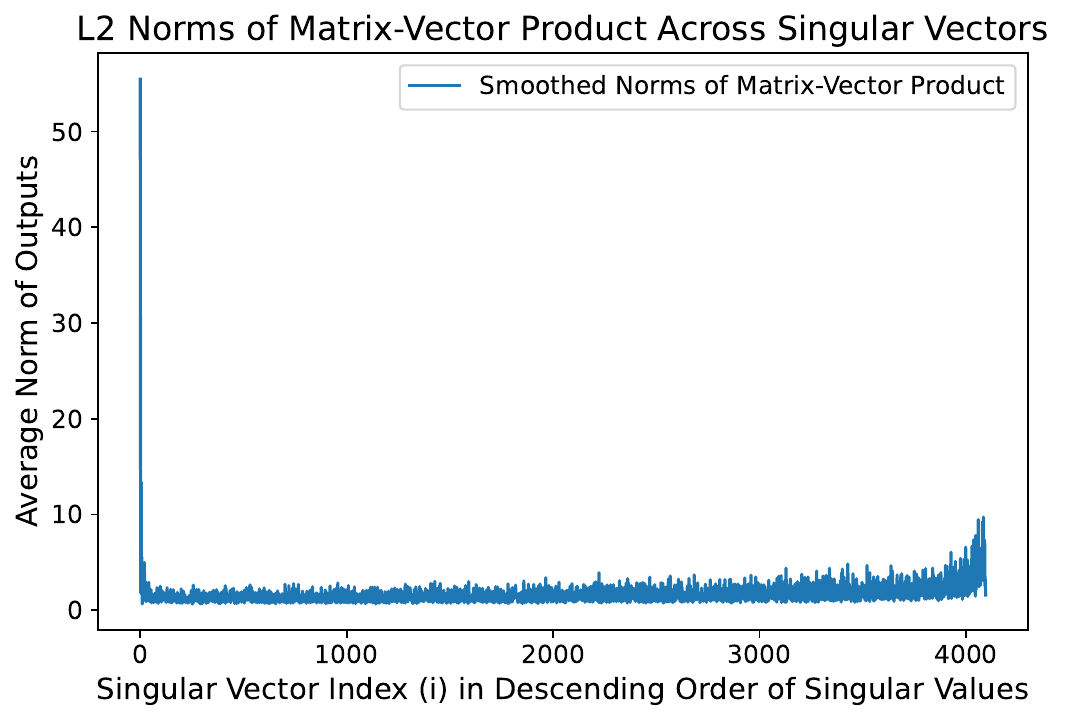}
    \caption{L2 norms of matrix-vector products for each singular vector component in the \texttt{mlp.down\_proj.weight} matrix (layer 34, Granite 8B), using inputs from a previously learned task. The clear downward trend confirms that low singular directions have minimal activation for the learned task.}
    \label{fig:svd_norms_analysis}
\end{figure}

We perform this analysis on the $\texttt{mlp.down\_proj.weight}$ matrix in layer 34 of Granite 8B using data from a previously learned task. The results are presented in Figure~\ref{fig:svd_norms_analysis}. As expected, the output norm steadily decreases from left to right, where the x-axis corresponds to singular vector indices sorted in descending order of singular values. The three highest singular directions yield norms of 55.5, 18.1 and 1.8, respectively, indicating a sharp drop in signal strength after the top components. This supports our hypothesis that later singular directions primarily encode negligible components. In particular, this layer retained performance even after a 99\% rank reduction, matching the performance of the unmodified Granite 8B model on the Leaderboard benchmark, indicating substantial redundancy in the matrix.

These diagnostic experiments laid the groundwork for our final approach, which leverages projected gradient descent restricted to low-rank subspaces. Importantly, these subspaces are adaptively selected to minimize interference with previously learned tasks while preserving expressive capacity for learning new ones. Detailed analysis of singular value statistics across all layers and matrix types is provided in Appendix~\ref{subsec:singular_value_analysis}.

\subsection{Singular Value Statistics and Rank Analysis of the Granite 8B Model}
\label{subsec:singular_value_analysis}

To better understand how to select which singular vectors to fine-tune within model weight matrices, we analyzed the singular value statistics of each matrix using tools from Random Matrix Theory (RMT). Specifically, we examined the use of the lower bound of the Marchenko–Pastur distribution—following the approach in SPECTRUM~\citep{hartford2024spectrumtargetedtrainingsignal}—to distinguish signal from noise. Singular values that fell below this bound were treated as noise, allowing us to estimate the effective rank of each matrix. However, we observed that, under this criterion, all weight matrices in the Granite 8B model appear to be full-rank. This outcome is attributed to the violation of the core assumptions of the Marchenko–Pastur law—namely, that matrix entries are independently and identically distributed—which clearly does not hold in trained language models where parameters are highly structured and correlated. Consequently, we adopted a scaled thresholding approach, informed by descriptive statistics such as the minimum, mean, median, and maximum singular values within each layer.

To support the adaptive rank selection strategy introduced in the main paper, we performed a comprehensive analysis of the singular value spectra across all weight matrices in the Granite 8B model. For each matrix type (e.g., \texttt{q\_proj}, \texttt{k\_proj}, \texttt{v\_proj}, \texttt{o\_proj}, \texttt{gate\_proj}, \texttt{up\_proj}, \texttt{down\_proj}), we compute and visualize the distribution of minimum, maximum, mean, and median singular values across all transformer layers (Figures~\ref{fig:q_proj_singular_stats}--\ref{fig:down_proj_singular_stats}). We also construct a heatmap illustrating the variation of mean singular values throughout the network (Figure~\ref{fig:heatmap_singular_values}). These statistics provide useful insights into which low singular vectors and corresponding subspaces are suitable for fine-tuning during continual learning.

\begin{figure}[h]
    \centering
    \includegraphics[width=0.85\textwidth]{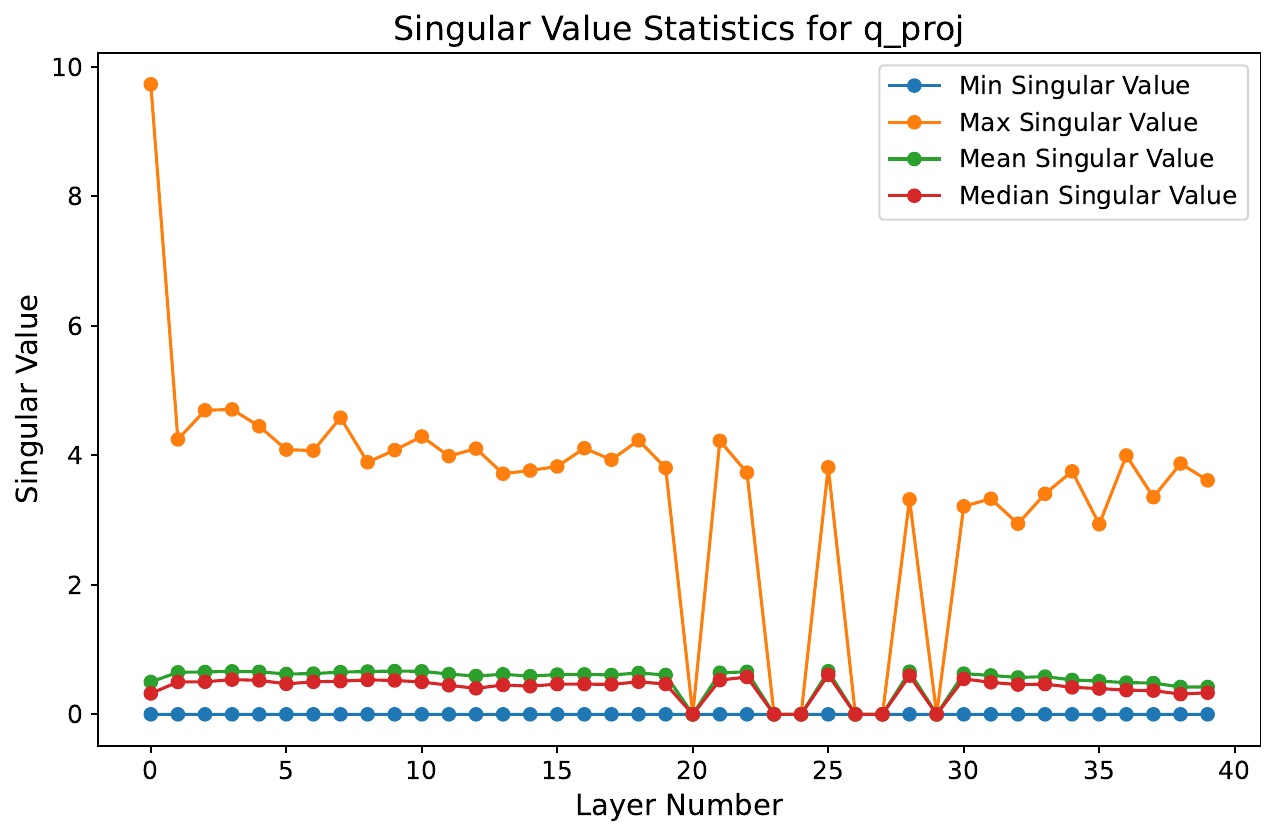}
    \caption{Singular value statistics for the \texttt{attn.q\_proj.weight} matrix across Granite 8B layers.}
    \label{fig:q_proj_singular_stats}
\end{figure}

\begin{figure}[h]
    \centering
    \includegraphics[width=0.85\textwidth]{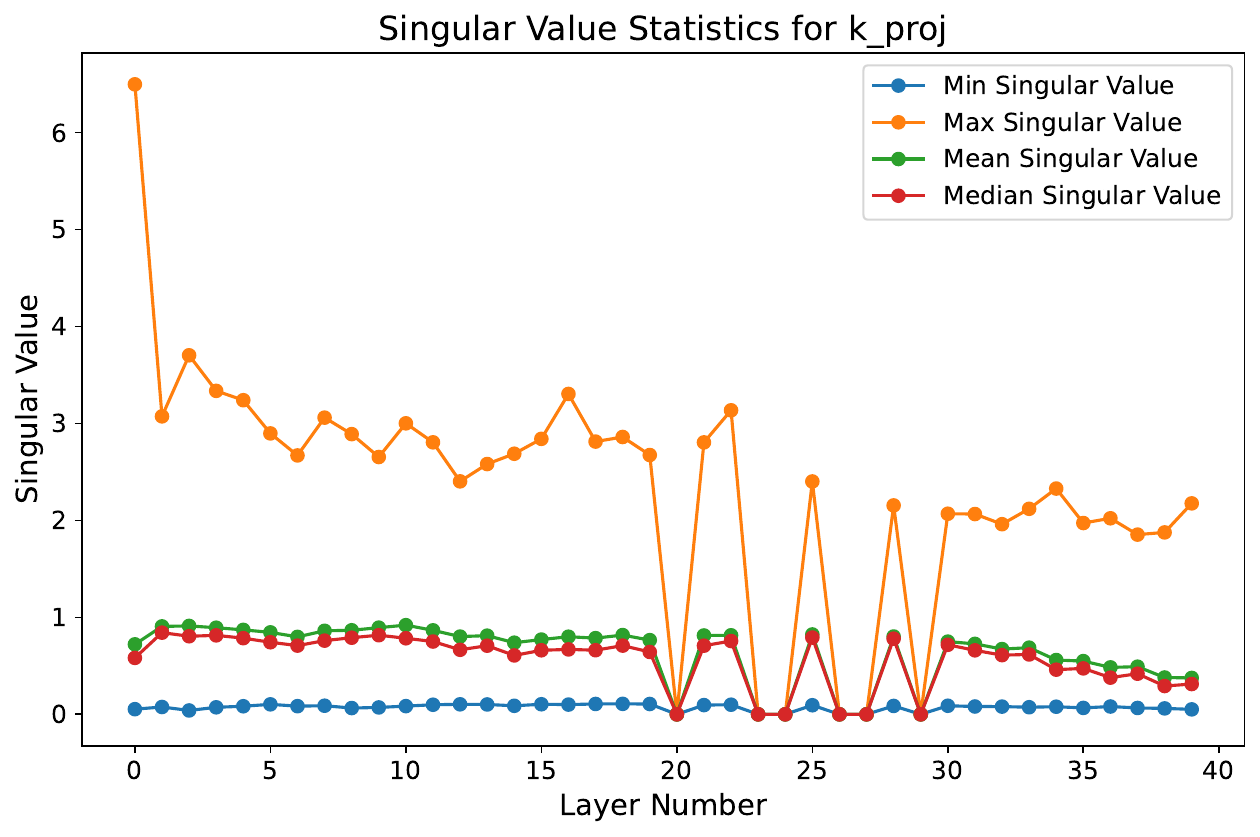}
    \caption{Singular value statistics for the \texttt{attn.k\_proj.weight} matrix across layers.}
    \label{fig:k_proj_singular_stats}
\end{figure}

\begin{figure}[h]
    \centering
    \includegraphics[width=0.85\textwidth]{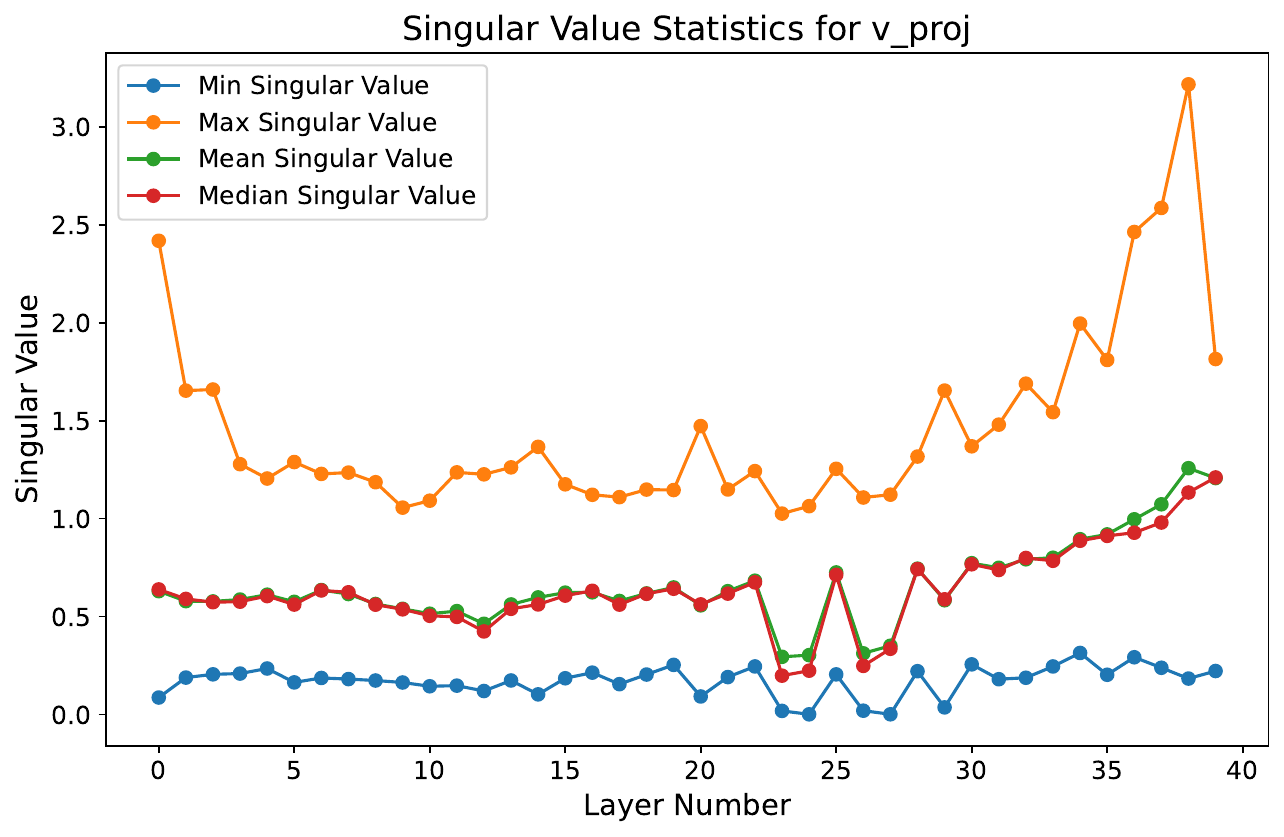}
    \caption{Singular value statistics for the \texttt{attn.v\_proj.weight} matrix across layers.}
    \label{fig:v_proj_singular_stats}
\end{figure}

\begin{figure}[h]
    \centering
    \includegraphics[width=0.85\textwidth]{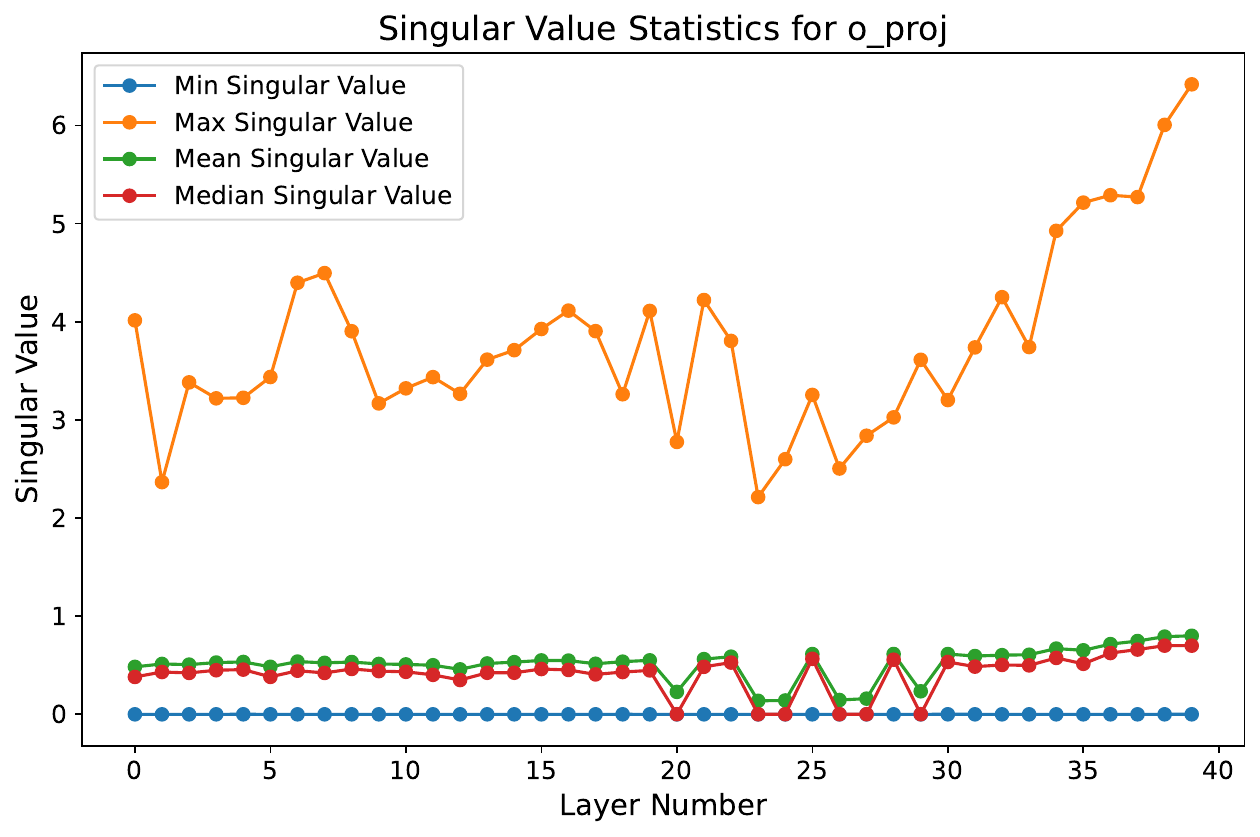}
    \caption{Singular value statistics for the \texttt{attn.o\_proj.weight} matrix across layers.}
    \label{fig:o_proj_singular_stats}
\end{figure}

\begin{figure}[h]
    \centering
    \includegraphics[width=0.85\textwidth]{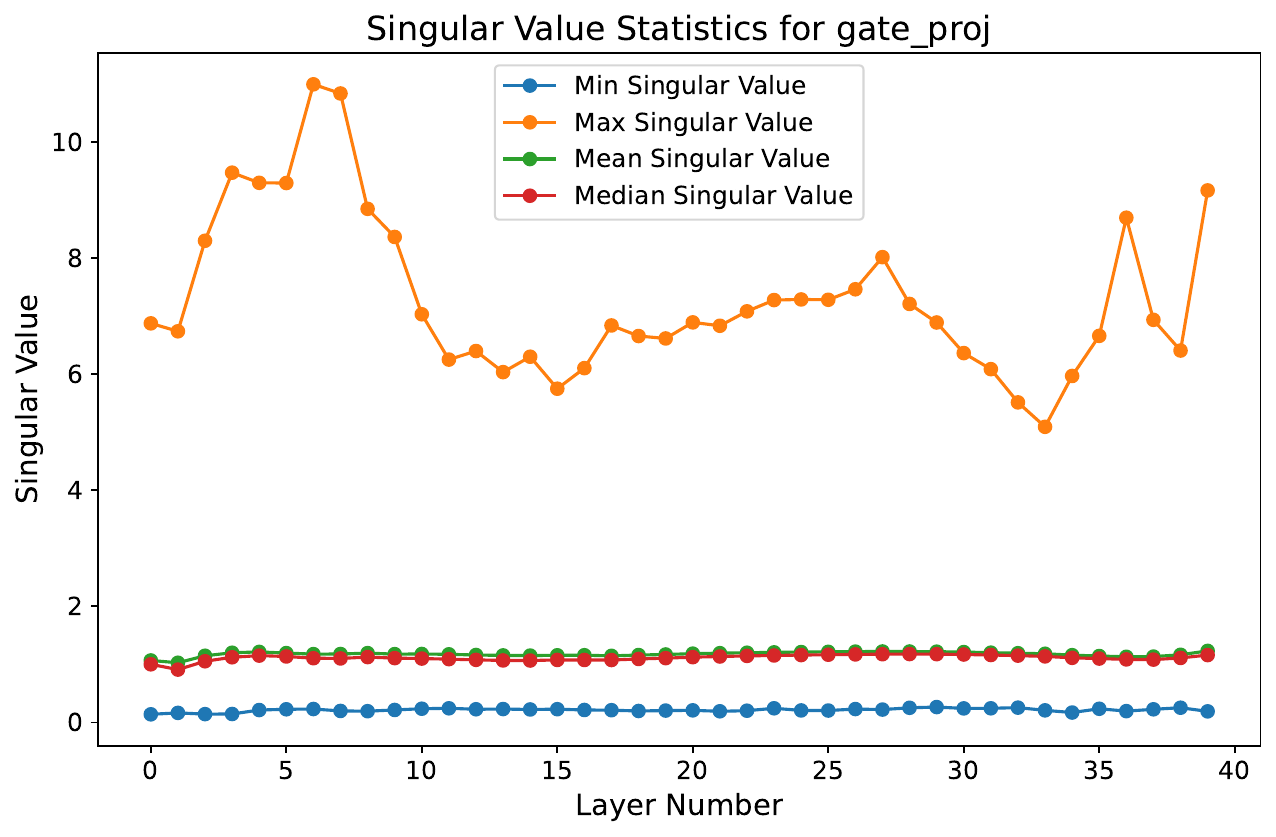}
    \caption{Singular value statistics for the \texttt{mlp.gate\_proj.weight} matrix across layers.}
    \label{fig:gate_proj_singular_stats}
\end{figure}

\begin{figure}[h]
    \centering
    \includegraphics[width=0.85\textwidth]{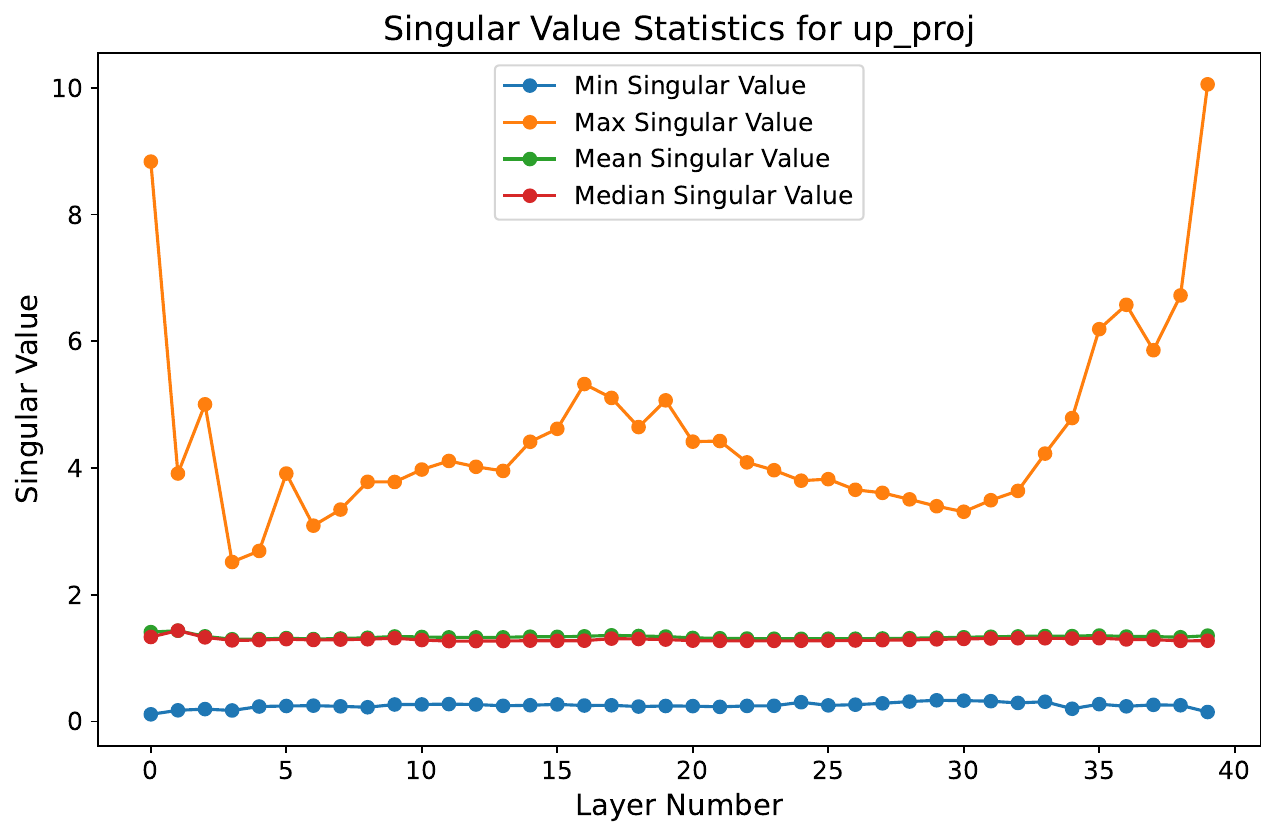}
    \caption{Singular value statistics for the \texttt{mlp.up\_proj.weight} matrix across layers.}
    \label{fig:up_proj_singular_stats}
\end{figure}

\begin{figure}[h]
    \centering
    \includegraphics[width=0.85\textwidth]{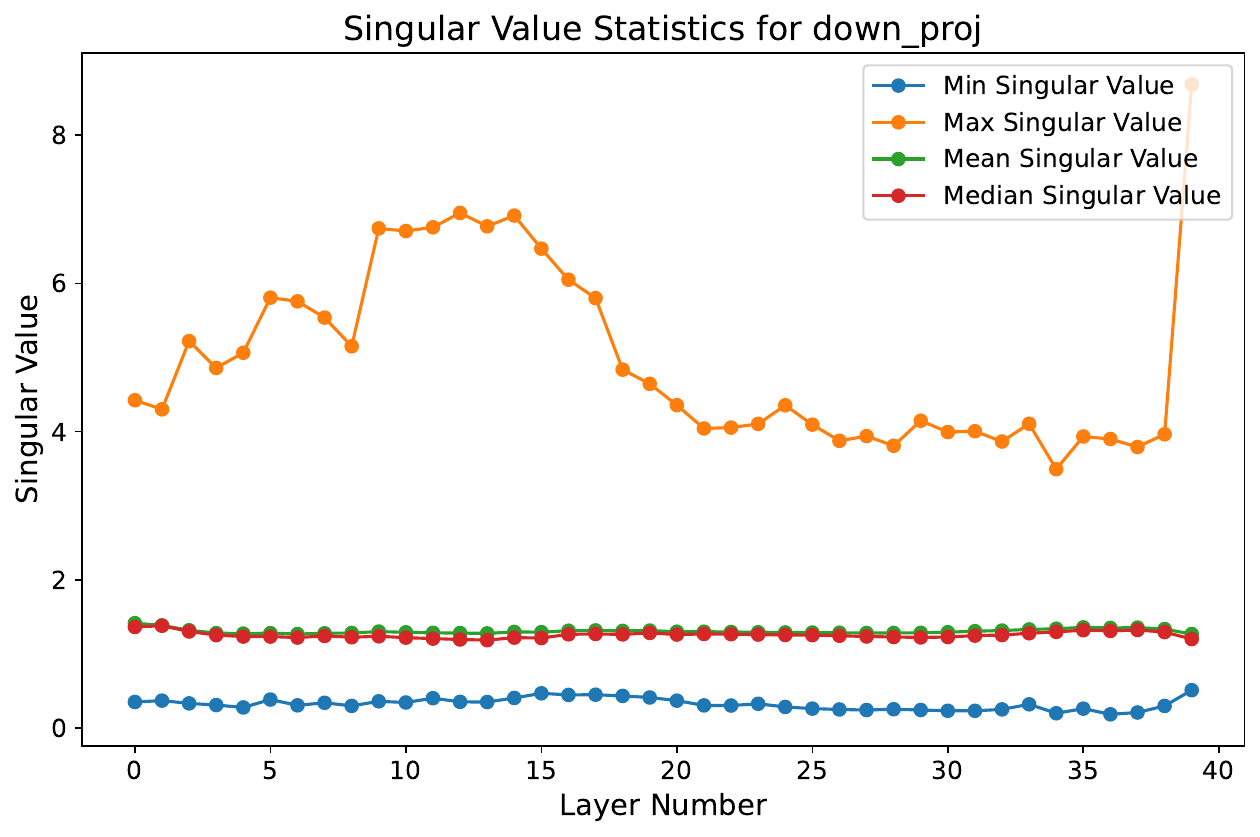}
    \caption{Singular value statistics for the \texttt{mlp.down\_proj.weight} matrix across layers.}
    \label{fig:down_proj_singular_stats}
\end{figure}

\begin{figure}[h]
    \centering
    \includegraphics[width=0.95\textwidth]{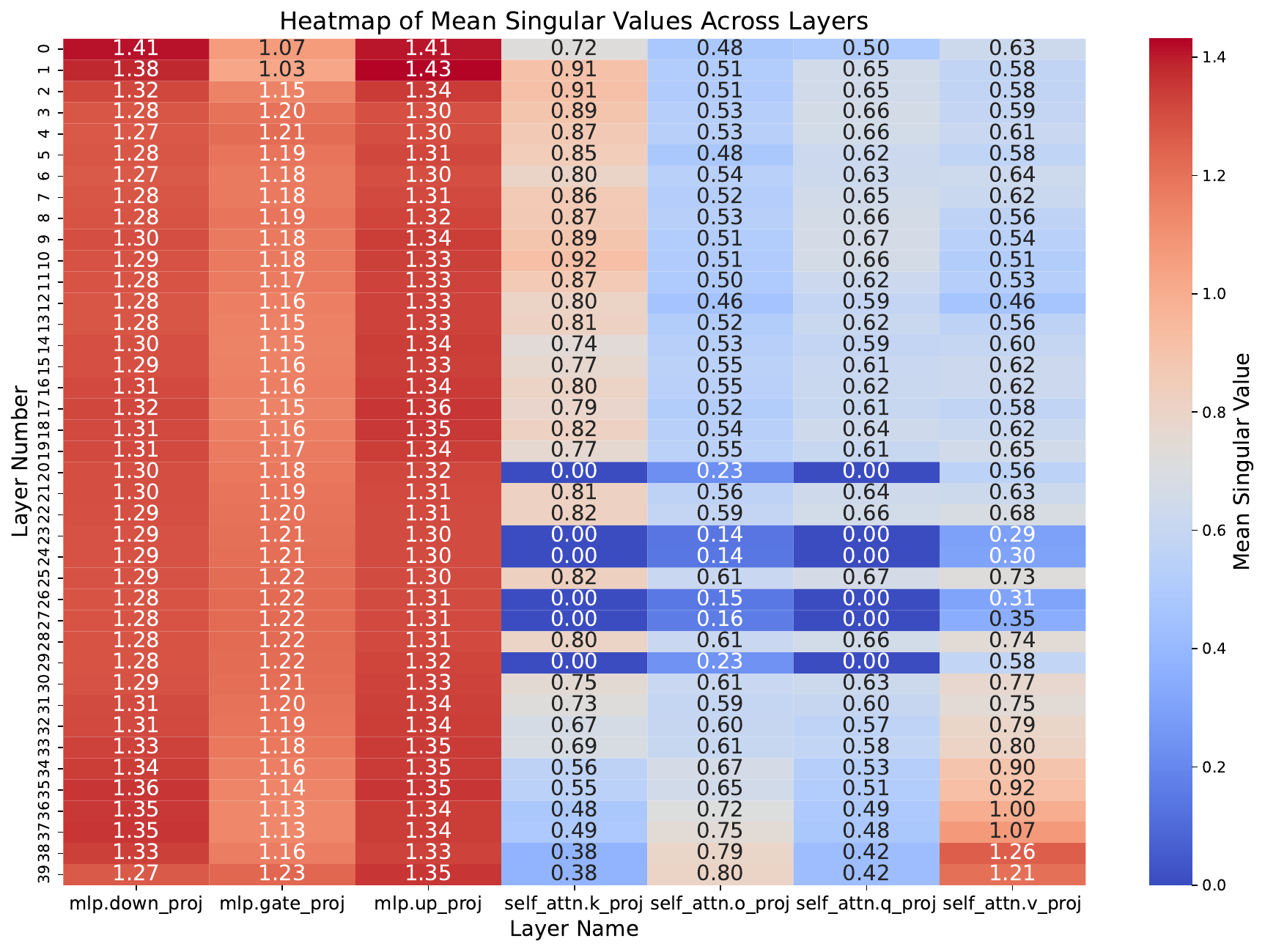}
    \caption{Heatmap of mean singular values across all matrices and transformer layers in Granite 8B.}
    \label{fig:heatmap_singular_values}
\end{figure}

\subsection{Comparison with Model Merging Techniques}
\label{appendix:model_merging}

We compare against two model merging techniques—SLERP (Spherical Linear Interpolation) and TIES (Task-Informed Ensemble Synthesis)—to assess their applicability in the continual learning setting. SLERP was applied by merging full model weights sequentially: after each task, the model was interpolated with the next task's model on the unit hypersphere. TIES was applied to linearly combine task-specific LoRA adapters using weights tuned on a held-out validation set. Our adaptive SVD-based approach significantly outperforms both (see Table~\ref{tab:results_summary}). In continual learning benchmarks involving many tasks, such as the 5-task and 15-task settings examined here, finding effective merge strategies becomes increasingly challenging. Moreover, even after identifying an optimal strategy, extensive hyperparameter tuning, experimentation, and expert knowledge are typically required to merge models effectively without compromising task performance over long task sequences. This complexity makes such merging approaches less practical compared to our proposed method.

\subsection{Ablation Studies}
\label{appendix:ablations}

To better understand the contribution of key components in our method, we conduct two ablation studies using the LLaMA-2 7B model on the standard continual learning benchmark comprising 5 classification tasks (AG News, Amazon, Yelp, DBpedia, Yahoo). These ablations are designed to evaluate: (1) the importance of accurate effective rank estimation for singular vector selection, and (2) the necessity of constraining updates to remain within the low-rank subspace via projection.

\textbf{(1) Impact of Inaccurate Effective Rank Estimation:}  
Our method relies on computing an effective rank per matrix based on input-output activation similarity, which informs the threshold for partitioning singular vectors into high- and low-rank subspaces. To test the importance of this estimation, we reduce both the minimum and target retention ratios (mrr and trr) to half their original values. This results in more aggressive fine-tuning by retaining fewer high singular vectors, thus allocating more of the matrix capacity to learning new tasks. However, this also increases the risk of overwriting components important for previous tasks. As shown in Table~\ref{tab:ablations}, this ablation leads to a substantial performance drop of just over 28 percentage points (from 79.6\% to 51.5\%), emphasizing the importance of accurately estimating the effective rank to ensure that task-relevant subspaces are preserved.

\textbf{(2) Unconstrained Fine-Tuning of Low Singular Vectors:}  
In our method, gradient updates are projected back into the low-rank subspace to prevent interference with high-rank directions. This ablation removes that constraint: we freeze the high singular vectors but allow unconstrained updates to the low singular vectors, meaning that during optimization, updates are not restricted to stay within the initially identified low-rank subspace. This allows the low singular vectors to drift into the space previously occupied by high singular vectors, leading to potential interference and loss of previously acquired knowledge. As expected, this results in catastrophic forgetting, with accuracy dropping from 79.6\% to 31.2\%. In addition, since only the low singular vectors are updated while the high ones are frozen, each new task is forced to be learned in a restricted subspace, limiting the model’s overall expressiveness. Together, these factors result in a $\approx 50$-point accuracy drop, highlighting the necessity of maintaining orthogonality between new task updates and previously learned subspaces.

\begin{table}[h]
\centering
\caption{Ablation results on the LLaMA-2 7B model using the standard 5-task continual learning benchmark.}
\label{tab:ablations}
\begin{tabular}{l c}
\toprule
\textbf{Method} & \textbf{Average Accuracy (\%)} \\
\midrule
Ours (Adaptive SVD) & 79.6 \\
(1) Halved mrr/trr (aggressive effective rank approximation) & 51.5 \\
(2) No projection (unconstrained low-rank updates) & 31.2 \\
\bottomrule
\end{tabular}
\end{table}

\subsection{Implementation Details}
\label{appendix:implementation_details}

We detail the implementation of all experiments presented in this work. Our study utilizes both encoder-decoder and decoder-only language models. For all continual learning experiments—including the 5-task and 15-task benchmarks, as well as the TRACE benchmark—we replicate the task sequences, prompts, and dataset configurations as established in O-LoRA~\cite{wang2024olora} and TRACE~\cite{wang2023trace}.

\paragraph{T5-Large.} Experiments with the T5-Large model were conducted on a single NVIDIA H100 GPU using standard PyTorch training in full precision. We used a constant learning rate of $5 \times 10^{-5}$ with the AdamW optimizer and a total batch size of 8, training for one epoch per task. For each classification dataset, we sampled 1,000 examples per class (where available) to construct balanced training sets, following the protocol established in~\cite{wang2024olora}. All runs were performed with a fixed random seed, and checkpoints were saved after each task for evaluation and reproducibility.

\paragraph{LLaMA-2 7B.} All experiments with the LLaMA-2 7B models were conducted on a server equipped with 8 NVIDIA H100 GPUs, using the DeepSpeed library with Stage 2 optimization. Gradient checkpointing was enabled, and training was performed with a per-GPU batch size of 1 (resulting in an effective batch size of 8). We used the AdamW optimizer with a learning rate of $1 \times 10^{-5}$, weight decay of 0.01, $\beta_1 = 0.9$, $\beta_2 = 0.999$, and $\epsilon = 1 \times 10^{-8}$. All continual learning runs were trained for one epoch per task. After backpropagation, projection steps were applied to the gradients to constrain updates within the designated low-rank subspaces.

Our SVD configuration was automatically generated by analyzing specific matrices in each transformer block—namely, \texttt{q\_proj}, \texttt{k\_proj}, \texttt{v\_proj}, \texttt{o\_proj}, \texttt{gate\_proj}, \texttt{up\_proj}, and \texttt{down\_proj}. Among the various strategies we explored for determining which singular vectors to retain, we found empirically that two approaches consistently performed best. The first allocates a fixed budget by freezing the top $\dfrac{i - 1}{n}$ fraction of singular vectors for task $i$ in an $n$-task sequence. The second uses adaptive rank selection based on layer importance scores, as described in Section~\ref{subsec:adaptive_rank}, where the number of retained singular vectors per layer is computed using the normalized importance $I^{(l)}$ from Section~\ref{subsec:layer_importance}. For this method, we empirically set $\mathrm{mrr}=0.1$ and $\mathrm{trr}=0.8$, which were found to yield consistently strong performance. The remaining components were fine-tuned using projected gradient descent within the low-rank subspace.

\paragraph{Datasets, Task Sequences, and Instructions.} Across all three experimental settings—the 5-task standard CL benchmark, the 15-task longer sequence benchmark, and the 8-task TRACE benchmark—we strictly adhered to the original configurations of O-LoRA~\cite{wang2024olora} and TRACE~\cite{wang2023trace}. This included using the same datasets, task instructions for prompting models during classification and generation, and identical training and validation sample counts and label distributions per task. Task sequences were replicated exactly to ensure consistency across evaluations and facilitate fair comparisons.

\end{document}